\documentclass{article}


 
\usepackage[table]{xcolor}


     \usepackage[final,nonatbib]{neurips_2025}


\usepackage[utf8]{inputenc} 
\usepackage[T1]{fontenc}    
\usepackage{hyperref}       
\usepackage{url}            
\usepackage{booktabs}       
\usepackage{amsfonts}       
\usepackage{nicefrac}       
\usepackage{microtype}      
\usepackage{xcolor}         

\usepackage{textcomp} 

\usepackage{hyperref}
\usepackage{url}
\usepackage{booktabs}       
\usepackage{multirow}
\usepackage{algorithm}
\usepackage[noend]{algorithmic}

\usepackage[numbers]{natbib}

\usepackage{amsmath}
\usepackage{graphicx}
\usepackage{pifont}
\usepackage{amsbsy}
\usepackage{wrapfig}

\definecolor{orangeschema}{HTML}{c38d01}
\definecolor{blueschema}{HTML}{526b90}

\definecolor{darkgreen}{HTML}{355E3B}
\definecolor{darkred}{HTML}{800020}

\newcommand{\acco}{\texttt{ACCO}}
\usepackage{subfigure}

\newcommand{\ie}{\emph{i.e.}}
\newcommand*\OK{\color{darkgreen} \ding{51}}
\newcommand{\NO}{\color{darkred} \text{\ding{55}}}


\usepackage{amsmath}
\usepackage{amssymb}
\usepackage{mathtools}
\usepackage{amsthm}

\usepackage[capitalize,noabbrev]{cleveref}

\theoremstyle{plain}
\newtheorem{theorem}{Theorem}[section]
\newtheorem{proposition}[theorem]{Proposition}

\theoremstyle{definition}

\theoremstyle{remark}

\usepackage[textsize=tiny]{todonotes}

\usepackage{listings}
\usepackage{xcolor}

\lstdefinelanguage{YAML}{
  keywords={true,false,null,y,n},
  keywordstyle=\color{blue},
  sensitive=false,
  comment=[l]{\#},
  commentstyle=\color{gray}\ttfamily,
  morecomment=[s][\color{gray}]{\#}{\#},
  stringstyle=\color{teal},
  identifierstyle=\color{black},
}

\lstset{
  basicstyle=\ttfamily\small,
  columns=fullflexible,
  breaklines=true,
  frame=single,
  backgroundcolor=\color{gray!5},
  showstringspaces=false
}

\title{\acco: Accumulate While You Communicate for Communication-Overlapped Sharded LLM Training}

%

\author{%
Adel Nabli$^{1, 2}$ \quad Louis Fournier$^{1*}$ \quad Pierre Erbacher$^{1*}$ \quad Louis Serrano$^1$ \\ \textbf{Eugene Belilovsky}$^2$ \quad \textbf{Edouard Oyallon}$^1$\\
$^1$Sorbonne Université, CNRS, ISIR, Paris - France \\
$^2$Mila - Quebec AI Institute, Concordia University, Montréal - Québec\\
\texttt{edouard.oyallon@cnrs.fr}\\
}

\begin{document}

\maketitle
\def\thefootnote{*}\footnotetext{Equal contributions.}\def\thefootnote{\arabic{footnote}}
\begin{abstract}
Training LLMs relies on distributed implementations using multiple GPUs to compute gradients in parallel with sharded optimizers. However, synchronizing gradients in data parallel setups introduces communication overhead that grows with the number of workers, limiting parallelization efficiency. Local optimization algorithms reduce communications but incur high memory costs as they prevent optimizer state sharding, hindering scalability. To address this, we propose \textbf{AC}cumulate while \textbf{CO}mmunicate (\acco), a memory-efficient optimization algorithm for distributed LLM training. By synchronizing delayed gradients while computing new ones, \acco~reduces GPU idle time and supports heterogeneous hardware. To mitigate the convergence issues caused by delayed updates, we introduce a novel technique ensuring training dynamics align with standard distributed optimization. Compared to ZeRO-1, our approach is significantly faster and scales effectively across heterogeneous hardware.
\end{abstract}

\section{Introduction}

Training Large Language Models (LLMs) with billions of parameters requires thousands of GPUs operating in parallel \citep{touvron2023llama}. This process typically relies on distributed backpropagation \citep{PytorchDDP2020} and gradient-based optimizers such as Adam \citep{KingBa15} or AdamW \citep{loshchilov2018decoupled}. However, distributed optimization at this scale is both memory- and communication-intensive. In standard data-parallel training, memory bottlenecks arise primarily from the optimizer's internal states, especially under mixed-precision training. Techniques like ZeRO \citep{Zero1} mitigate this by sharding states across workers. Due to limited GPU memory and the large size of modern models,  large-scale LLM training frameworks must rely on such sharded partitioning \citep{shoeybi2019megatron, DeepSpeed, gpt-neox-library}. In addition to memory constraints, communication overhead becomes a dominant performance bottleneck, as synchronizing gradients and optimizer states across GPUs can exceed the time spent on actual computation \citep{ortiz2021tradeoffs}, a problem expected to persist even with future hardware advances \citep{pati2023computation}. The impact of communication is further amplified by the cluster's interconnect topology: effective sharding requires high-bandwidth links, and heterogeneous hardware or slow interconnects introduce straggler effects that slow down the entire system \citep{Dutta2021, mishchenko2022asynchronous}.

To mitigate these issues, various communication-efficient distributed optimization algorithms have been proposed, particularly in settings with limited bandwidth such as Federated Learning. Local-update methods \citep{stich2018local, Wang2020SlowMo, mcmahan17a, Konecn2016FederatedOD,douillard2023diloco} reduce communication by splitting training into inner loops (local steps) and outer loops (synchronization steps). While this approach reduces frequency of communication, it introduces additional hyperparameters compared to standard training, scales poorly with the number of workers, and significantly increases memory requirements. For instance, the state-of-the-art CO2 method \citep{sun2024co} requires memory overhead equivalent to four model copies—much more than standard distributed Adam and even further from its sharded variants \citep{Zero1}. Some local-update methods overlap communication with computation to hide latency, by executing both concurrently \citep{OverlapSGD2020, COCOSGD2019, EASGD2015, sun2024co}. However, this is challenging in sharded setups, since local updates typically require full optimizer states to be materialized, which defeats the purpose of memory or communication savings. A notable exception is ZeRO-Offload \citep{Zerooffload}, which introduces the Delayed Parameter Update (DPU) mechanism: it runs gradient computation on GPUs while concurrently performing parameter updates on CPUs. Yet this approach suffers from one-step staleness—gradients \citep{rivaud2025petra} are applied to outdated parameters—which harms convergence \citep{zhuang2021fully}. Thus it struggles to match the performance or training dynamics of standard LLM training.

In this work, we introduce \textbf{AC}cumulate while you \textbf{CO}mmunicate (\acco), a new optimization method that unifies the benefits of communication–computation overlap with memory-efficient training, matching the training dynamics of AdamW DDP without new hyperparameters to tune. Specifically \textbf{(1)} \acco~allows overlapping gradient computation and parameter updates  in the memory-efficient sharded optimizer settings. \textbf{(2)} It eliminates the need for outer loops, reducing memory and tuning overhead. \textbf{(3)} Crucially, it compensates for the one-step delay introduced by parallel execution of communication and computation by using a novel accumulation mechanism, which avoids the convergence degradation observed in DPU-style updates. \textbf{(4)} Unlike prior approaches, \acco~is compatible with sharded training frameworks and requires no warm-up phase. \textbf{(5)} In the case of SGD, we prove that \acco~achieves the standard convergence rate, and \textbf{(6)} we confirm it empirically: \acco~consistently preserves convergence quality across both homogeneous and heterogeneous environments, with training curves that \emph{mirror} those of AdamW---just as our theory predicts. \textbf{(7)} Our experiments on diverse LLM pretraining and fine-tuning tasks show that \acco~delivers substantial wall-clock speedups and effective communication hiding compared to ZeRO—without compromising training stability or memory efficiency. The code to reproduce all our experiments is available at \url{https://github.com/edouardoyallon/acco}.


\begin{figure}
     \begin{center}
          \includegraphics[width=1\columnwidth]{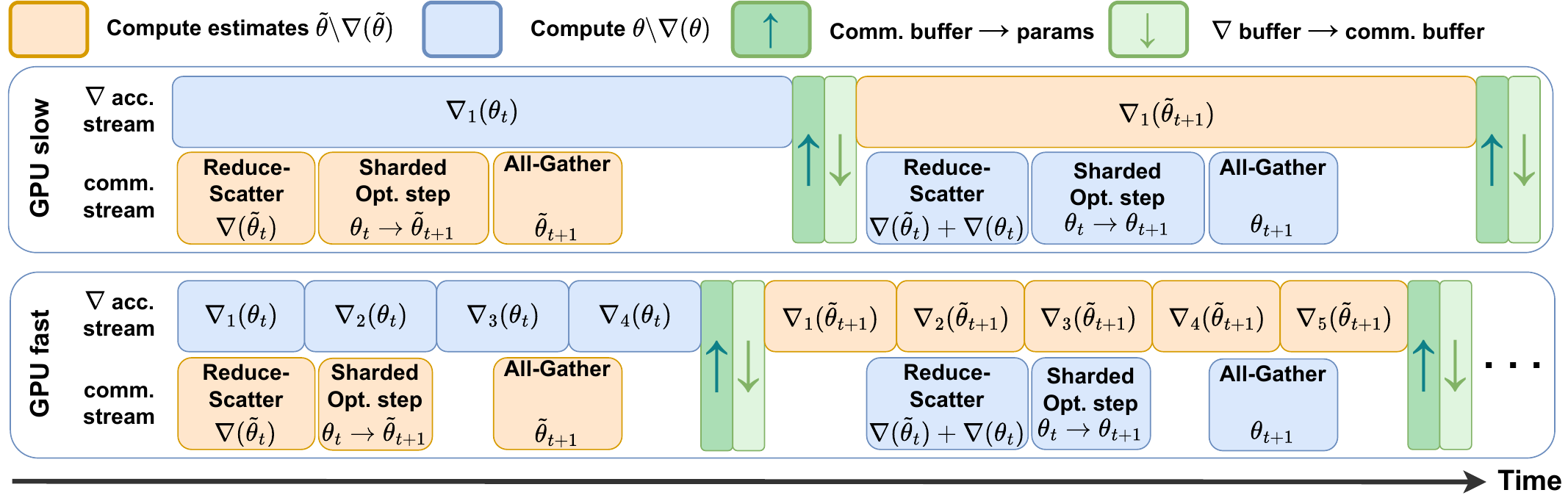}
    \end{center}
    \vspace{-10pt}
    \caption{\acco~with a slow and a fast worker running in parallel, showing no idle time on both and hiding communications. The delayed update is compensated by splitting the mini-batch in two, leading to two steps in our timeline. The first uses half of the mini-batch to estimate "next step" parameters, and the second uses the full mini-batch to update them.}
    \label{ACCO_method}
    
\end{figure}

\begin{table*}[!ht]
\begin{center}
\caption{Comparisons of characteristics and memory consumption. $\Psi$: number of parameters in the model. $N$: number of workers. $K$: memory multiplier of the optimizer (Adam or AdamW). For SlowMo \citep{Wang2020SlowMo} and CO2 \citep{sun2024co}, no mention of mixed precision training is made. We assume they use it and that their additional terms are stored in half precision. While no additional momentum is required for our method, we still need a negligible communication buffer compared to the optimizers' states.}
\label{tab:relatedwork}
\begin{small}

\begin{tabular}{l|c|c|c|c|r}
\toprule
\shortstack{\textbf{Method} \\ \textbf{} } & 
\shortstack{\textbf{Overlap} \\ \textbf{comm/comp}} & 
\shortstack{\textbf{Hetero.} \\ \textbf{hardware}} & 
\shortstack{\textbf{No outer} \\ \textbf{loop}} & 
\shortstack{\textbf{Convergence} \\ \textbf{Rates}} & 
\shortstack{\textbf{Memory per replicas} \\  $(K,N,\Psi){=}(12,64,7.5\text{B})$} \\
\midrule
DDP \citep{PytorchDDP2020} & \NO & \NO & \OK & \OK & \cellcolor{red!15}$(2{+}2{+}K)\Psi$ = 120 GB \\
ZeRO-1 \citep{Zero1} & \NO & \NO & \OK & \OK & \cellcolor{green!15}$(2{+}2{+}\frac{K}{N})\Psi$ = 31 GB \\
ZeRO-2 \citep{Zero1} & \NO & \NO & \OK & \OK & \cellcolor{green!15}$(2{+}\frac{2+K}{N})\Psi$ = 16 GB \\
ZeRO-3 \citep{Zero1} & \NO & \NO & \OK & \OK & \cellcolor{green!15}$(\frac{2{+}2{+}K}{N})\Psi$ = 2 GB \\
\midrule
SlowMo \citep{Wang2020SlowMo} & $\boldsymbol{\sim}$ & \NO & \NO & $\boldsymbol{\sim}$& \cellcolor{red!15}$(2{+}2{+}2{\times}2{+}K)\Psi$ = 150 GB \\
DiLoCo \citep{douillard2023diloco} & \OK & \NO & \NO & ? & \cellcolor{red!15}$(2{+}2{+}2{\times}2{+}K)\Psi$ = 150 GB \\
CO2 \citep{sun2024co} & \OK & \NO & \NO & \OK & \cellcolor{red!15}$(2{+}2{+}4{\times}2{+}K)\Psi$ = 180 GB \\
\midrule
DPU \citep{Zerooffload} & \OK & \NO & \OK & \NO & \cellcolor{green!15}$(2{+}2{+}2{+}\frac{K}{N})\Psi$ = 46 GB \\
WP \citep{chen2019efficient} & \OK & \NO & \OK & \NO & \cellcolor{green!15}$(2{+}2{+}2{+}\frac{K}{N})\Psi$ = 46 GB \\
\midrule
\acco~(Ours) & \OK & \OK & \OK & \OK & \cellcolor{green!15}$(2{+}2{+}2{+}\frac{K}{N})\Psi$ = 46 GB \\
\bottomrule
\end{tabular}
\end{small}
\end{center}

\end{table*}

\section{Related work}

\paragraph{Local optimization methods for reducing communications.} Local optimization methods perform several local model updates between periodic averaging. With the SGD optimizer, these algorithms predate the deep learning era \citep{parallelSGDNIPS2010, DistributedPerceptron2010}, and their convergence properties are still investigated nowadays \citep{ijcai2018p447,  stich2018local, woodworth20a, mishchenko2022proxskip}. Due to their practical and efficient communication scheme, they have since been used for the Distributed Training of Deep Neural Networks (DNNs) with methods such as EASGD \citep{EASGD2015}, SlowMo \citep{Wang2020SlowMo} or Post-local SGD \citep{Lin2020Dont, ortiz2021tradeoffs}, and are ubiquitous in Federated Learning \citep{mcmahan17a, Konecn2016FederatedOD, Fedprox2019}, broadening the choice of optimizers beyond SGD \citep{reddi2021adaptive, Karimireddy2020MimeMC, Chen2020TowardCE}. They have also recently been applied in LLM training \cite{douillard2023diloco,charles2025communication}. Overlapping communications over consecutive steps of local computations  allows to hide communication bottlenecks, resulting in algorithms such as Overlap local-SGD \citep{OverlapSGD2020}, COCO-SGD \citep{COCOSGD2019},  or CO2 \citep{sun2024co}. Moreover, with heterogeneous hardware, they can adapt their local computation rate to their hardware capacity \citep{diskin2021distributed, maranjyan2022gradskip}. Yet, this comes at the price of additional memory requirements: due to their local nature \citep{chen2022sapipe}, not only do these methods prevent the use of sharded optimizers such as ZeRO \citep{Zero1}, but they also introduce additional control variables \citep{Wang2020SlowMo, mishchenko2022proxskip, sun2024co}, hindering their scalability as shown in Tab. \ref{tab:relatedwork}. Moreover, catering for heterogeneous hardware is not straightforward, as using different numbers of local updates leads to models shifting at different speeds, requiring extra care to counter this effect \citep{maranjyan2022gradskip}. On the contrary, \acco~does not lead to such disparities. Since all devices share the same parameters, the device speeds difference only affects the mini-batch size computation. Similarly, doing multiple local optimizer updates makes the approach incompatible with optimizer sharding.
\vspace{-0.1cm}
\paragraph{Overlapping communications and computations.}
For the asynchronous approaches, some approaches overlap gradient and communication steps, either explicitly \citep{assran19a}, or by modeling them with independent stochastic processes \citep{nabli2022dadao, nabli2023acid,even2021continuized}. However, none of these works focus on memory efficiency. Thus, they introduce additional variables and do not consider sharding the optimizer states. Moreover, they do not study optimizers other than SGD, and extending their beneficial properties to adaptive methods commonly used for DNN training such as Adam is still an ongoing research topic \citep{Assran2020}. Delays being intrinsic to distributed asynchronous optimization, there is a rich literature studying them. In the case of distributed SGD in a parameter server setting, while early analysis showed convergence rates depending on the \emph{maximal} delay \citep{AgarwalNIPS2011, Stich2020EF}, recent lines of work improved these dependencies \citep{KoloskovaAsync2022, pwu22g, Feyzmahdavian2023}, proving that asynchronous SGD beats standard mini-batch SGD even with unbounded delays \citep{mishchenko2022asynchronous,nadiradze2021elastic}. However, they only study plain SGD, which is hardly used for DNN training. In this context, some work focused on the interplay between SGD with momentum and delays \citep{Mitliagkas2016, Mitliagkas2019}, while delay compensation schemes such as re-scaling updates \citep{DelayCompensation2017, xie2020asynchronous} or buffering them \citep{FedBuffnguyen22b} were proposed for Federated Learning. But still, they only study versions of SGD and not adaptive methods commonly used for LLMs training such as Adam \citep{KingBa15} or AdamW \citep{loshchilov2018decoupled}. Closer to our work, DPU was introduced as a memory-efficient way to train LLMs by running the optimizer on the CPU while gradients are computed on the GPU \citep{Zerooffload,li2018pipe}, inducing a one-step delay between the gradients computed and the corresponding optimizer step. To mitigate it, they advise starting training by warming up for several steps with  standard DDP. Perhaps surprisingly, we find in our experiments that this one-step delay has a noticeable influence on the convergence of LLMs training, even when using warmup steps. Contrary to DPU, we remove the need for them, with no impact on the convergence of our training. Moreover, as it is not its purpose, DPU still runs communications in the gradient computation stream, and is thus impacted both by the communication overhead of scaling and hardware heterogeneity. Finally, in pipeline parallelism, gradient delays also affect computation, and  weight prediction methods have been proposed to mitigate their effect \citep{chen2019efficient, PipeMare,kosson2021pipelined, yang2020pipemare}.
\vspace{-0.1cm}
\paragraph{Memory-efficient distributed training of LLMs.} 
The activation memory overhead required for training Transformers \citep{Vaswani2017} can be mitigated for an extra computational cost by reconstructing the input with reversible architectures \citep{rivaud2025petra,jacobsen2018revnet, mangalam2022reversible}, or recomputing the activations via checkpointing \citep{chen2016training}. 
Efficient LLM training also combines parallelism methods. Classical data parallelism (DP)~\citep{NIPS2012_6aca9700} suffers both from a high communication volume and a linear increase in memory due to the model replicas. ZeRO-DP~\citep{rajbhandari2020zero} and Fully-Sharded DP~\citep{zhao2023pytorch} avoid this issue by sharding the model states (i.e., the optimizer states, gradients, and parameters) between workers. This comes at the cost of further increasing the synchronization between workers and the communication volume, which can be mitigated by compression~\citep{wang2023zero}, memory trade-offs~\citep{zhang2022mics}, or delayed gradients~\citep{zhuang2020accumulated}. The memory can be even more reduced using expensive CPU-GPU communications to unload states on the CPU~\citep{Zerooffload, rajbhandari2021zeroinfinity}. On the other hand, model parallelism partitions the DNN components for parallelization, either with tensor parallelism~\citep{shoeybi2019megatron} by slicing a layer's computation on several workers, or with pipeline parallelism, which divides a model into sets of layers trained in parallel on mini-batch slices. Popularized by \citep{huang2019gpipe}, this method leaves some workers idling and an inefficient memory overhead~\citep{fan2021dapple}. Allowing delay in the gradients avoids worker idleness~\citep{narayanan2019pipedream, zhuang2020accumulated} but exacerbates the memory overhead, which can be partially mitigated with gradient accumulation~\citep{narayanan2021memory, zhuang2021fully} and activation checkpointing~\citep{kim2020torchgpipe, liu2023colossalauto}. Combining these frameworks results in the effective 3D parallelism~\citep{smith2022using}.

\section{Method}
\label{sec:method}

\subsection{Background and Notations}
We now present our framework as well as relevant prior methods for overlapping communication and computation from the perspective of an individual worker $i \in \{1, \ldots, N\}$. The goal is to minimize a differentiable objective function $f : \mathbb{R}^d \rightarrow \mathbb{R}$. All workers are initialized with identical parameters $\theta^{(0)} \in \mathbb{R}^d$, and at each iteration $t$, worker $i$ has access to a stochastic function $F : \mathbb{R}^d \times \Xi \rightarrow \mathbb{R}$, where $\Xi$ is a sample space derived from its local data shard. The gradient estimates are assumed to be unbiased: $\mathbb{E}[\nabla F(\theta, \xi)] = \nabla f(\theta)$ for $\xi \sim \Xi$. This setup allows for flexible, even time-varying, batch sizes depending on each machine’s speed. However, for simplicity, we assume each worker computes a fixed-size minibatch of $N_i$ samples per iteration. The resulting local gradient estimate is given by
\[\nabla F_i(\theta, \xi) \triangleq \frac{1}{N_i} \sum_{k=1}^{N_i} \nabla F(\theta, \xi_k), \quad \text{where } \xi = (\xi_1, \ldots, \xi_{N_i}).\]
We also consider a generic optimizer, such as \texttt{Adam} or \texttt{AdamW} (common in LLM training), denoted by \texttt{Opt}. Applying the optimizer may require synchronizing internal states (e.g., moments, learning rates) across workers, which introduces a communication barrier. This synchronization can be particularly costly in settings involving optimizer sharding, and may substantially limit GPU throughput.

\paragraph{Distributed Data Parallelism (DDP).} In a DDP setting, gradient computation and optimization steps are performed sequentially as follows, for a sequence of $\{\xi^{(t)}\}_t$:
\begin{equation}
\begin{aligned}
    g_i^{(t)} &= \nabla F_i(\theta^{(t)}, \xi^{(t)}) \,, \quad
    \theta^{(t+1)} = \texttt{Opt} \left( \theta^{(t)}, \sum_{i=1}^N \frac{N_i}{\sum_iN_i}g_i^{(t)} \right) ,
\end{aligned} \tag{DDP}
\end{equation}
where gradients are averaged across all $N$ workers. As illustrated in this formulation, each step is fully synchronous and must follow a strict order, which limits the potential for overlapping communication and computation. A common strategy to address this limitation is to introduce two parameter buffers, denoted $\theta$ and $\tilde\theta$, where one is used for computation and the other for communication. Building on this insight, we next describe the main techniques used to achieve such overlap, and then \acco.

\paragraph{Delayed Parameter Update (DPU).} We describe the original DPU \citep{Zerooffload}, and in our re-implementation, we run gradient communications in the same stream as the optimizer step, in parallel to the gradient computations. To prevent GPU  from being idle at step $t$, gradients are accumulated over as many mini-batches  as necessary until the communication process finishes. Then, DPU repeat the following, where each line can be run in parallel:
\begin{equation}
\tag{DPU}
\begin{array}{c}
g_i^{(t+1)} = \nabla F_i(\tilde \theta^{(t)}, \xi^{(t)}) \\[1.5ex]
\tilde\theta^{(t+1)} = \theta^{(t)}, \quad
\theta^{(t+1)} = \texttt{Opt} \left( \theta^{(t)}, 
 \sum_{i=1}^N \frac{N_i}{\sum_iN_i}g_i^{(t)} \right)\,.
\end{array}
\end{equation}

 Remark that, except at the first step $t=0$, the gradients used by \texttt{Opt} are computed on parameters $\tilde\theta^{(t)}=\theta^{(t-1)}$ which differ from $\theta^{(t)}$, the ones we apply them to. This is inherently due to the parallel nature of our execution, and what we denote by "delayed update". Sec. \ref{sec:DelayedExpe} shows that this has drastic impacts on the convergence, despite being only delayed by one time step. We hypothesize that, although the delay is limited to $\tau = 1$ and the dependency on delay is known to be linear \cite{Stich2020EF}, the training dynamics differ significantly from those of the standard setting.

\paragraph{Weight Prediction (WP).} Proposed by the work of \citep{chen2019efficient} on mitigating pipeline delays, a simple estimation strategy is to reuse the most recently received gradients and apply a second optimizer step. Compared to DPU, this modifies the update rule for $\tilde\theta^{(t+1)}$, leading to:
\begin{equation}
\tag{WP}
\begin{array}{c}
g_i^{(t+1)} = \nabla F_i(\tilde \theta^{(t)},\xi^{(t)}) \\[1.5ex]
\theta^{(t+1)} = \texttt{Opt} \left( \theta^{(t)}, \sum_{i=1}^N  \frac{N_i}{\sum_iN_i}g_i^{(t)} \right), \quad
\tilde \theta^{(t+1)} = \texttt{Opt} \left( \theta^{(t+1)}, \sum_{i=1}^N \frac{N_i}{\sum_iN_i} g_i^{(t)} \right)\,.
\end{array}
\end{equation}

While this enables overlap by decoupling communication  and computation, there is no formal guarantee that it leads to favorable convergence. We empirically evaluate this method in Sec.~\ref{sec:DelayedExpe}, and observe that its training dynamics deviate from the DDP baseline—unlike \acco.

\subsection{\acco: a structured approach to Communication-Computation overlap.} 
\begin{figure*}[ht]
     \begin{center}
          \vspace{-15pt}\includegraphics[width=1\columnwidth]{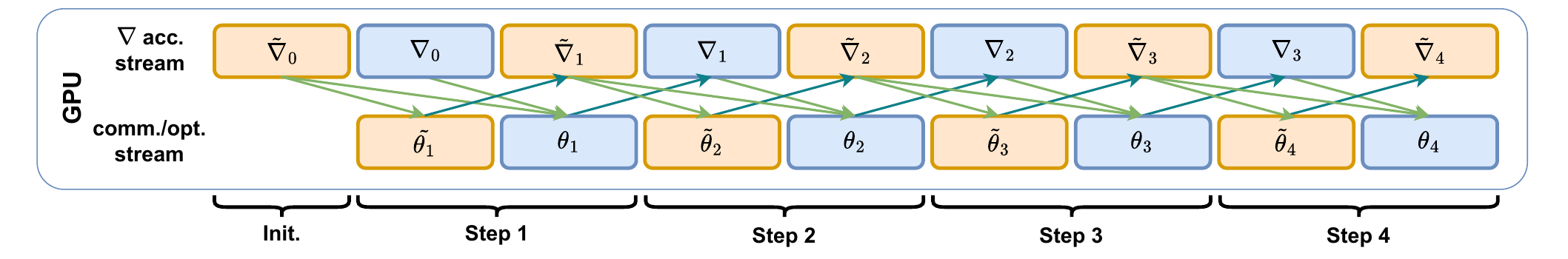}
    \end{center}
    \vspace{-16pt}
    \caption{\acco's two-stage mechanism (1)-(2) to compensate the delayed updates via overlapping.}
    \label{two_steps_mechanism}
        \vspace{-0.2cm}
\end{figure*}
\paragraph{\acco.}  \acco~ splits the computation of the mini-batch gradients into two successive stages, where the first half of the mini-batch is used to estimate $\tilde \theta^{(t+1)}$ while $\theta^{(t+1)}$ is calculated using the entire mini-batch. This is further motivated by the fact that gradient accumulation can be used to reach the extremely large batch sizes required to train LLMs~\citep{zhao2023survey}, and if gradients are computed \emph{sequentially} on a worker, we can leverage this to produce our estimate. Thus, the two stages, as in Fig. \ref{two_steps_mechanism}, are 
\begin{equation}
\begin{aligned}
\begin{array}{rlrl}
\text{(1)} 
\left\{
\begin{aligned}
    {\color{blueschema} g_i^{(t)}} &= \nabla F_i(\theta^{(t)},\xi^{(t)}) \, , \\
    {\color{orangeschema} \tilde \theta^{(t+1)}} &= \texttt{Opt} \left( 
    \theta^{(t)}, \textstyle\sum_{i=1}^N  \frac{N_i}{\sum_j N_j} \tilde g_i^{(t)} 
    \right) \, ,
\end{aligned}
\right.

\text{(2)}
 \left\{
\begin{aligned}
    {\color{orangeschema} \tilde{g}_i^{(t+1)}} &= \nabla F_i(\tilde\theta^{(t+1)},\tilde\xi^{(t)}) \, , \\
    {\color{blueschema} \theta^{(t+1)}} &= \texttt{Opt} \left( 
    \theta^{(t)}, \textstyle\sum_{i=1}^N  \frac{N_i}{2\sum_j N_j} (g_i^{(t)} + \tilde g_i^{(t)})
    \right) \, ,
\end{aligned}
\right.
\end{array}
\end{aligned}
\nonumber
\tag{\acco}
\end{equation}

We next describe the different components, whose streams can be run in parallel:
\begin{itemize}
    \item[(1)] {\color{blueschema} The  computation stream uses the second half of the mini-batch to compute the gradients $g_i^{(t)}$ with respect to parameters $\theta^{(t)}$} while {\color{orangeschema} the communication stream estimates what would be the next steps parameters $\tilde \theta^{(t+1)}$ using the estimated gradients $\tilde g_i^{(t)}$.}
    \item[(2)] {\color{orangeschema} The computation stream uses the first half of the mini-batch to estimate what would be the gradients $\tilde g_i^{(t+1)}$ of the next parameters $\theta^{(t+1)}$ using estimated parameters $\tilde \theta^{(t+1)}$} while {\color{blueschema} the communication stream computes $\theta^{(t+1)}$ using the full mini-batch. Note that it starts from the same version of the parameters $\theta^{(t)}$ as in step (1). The first half $\tilde g_i^{(t)}$ was estimated at step (2) of the \emph{last round}, while (1) compute the second half $ g_i^{(t)}$.}
\end{itemize}

\paragraph{Theoretical analysis of \acco.} 
We now state our main results (SGD, for simplicity), with complete proofs provided in the appendix. The key idea underlying all proofs is that, for any minimizer \( \theta^* \) of \( f \) and any \( \eta > 0 \), the following function serves as a Lyapunov function for our dynamics
\[
V(\theta, \tilde{\theta}) \triangleq f(\theta) - f(\theta^*) 
+ \eta L ( f(\tilde{\theta}) - f(\theta^*) ) 
+ L \| \theta - \tilde{\theta} \|^2\geq0\,.
\]


\begin{proposition}[GD]
Let $f : \mathbb{R}^d \to \mathbb{R}$ be $L$-smooth and $\theta^* \in \arg\min f$. For $\eta \leq \frac{1}{2L}$, define
\[
\theta_{t+1} = \theta_t - \tfrac{\eta}{2} \left( \nabla f(\theta_t) + \nabla f(\tilde\theta_t) \right), \quad \tilde\theta_{t+1} = \theta_t - \eta \nabla f(\tilde\theta_t).
\]
Then, for any $T \geq 1$, and initializations $\theta_0,\tilde\theta_0\in \mathbb{R}^d$, we have
\[
\frac{1}{T} \sum_{t=0}^{T-1} \left( \| \nabla f(\theta_t) \|^2 + \| \nabla f(\tilde\theta_t) \|^2 \right) 
\leq \frac{8}{\eta T} \left( f(\theta_0) + f(\tilde\theta_0) - 2f(\theta^*) + L \| \theta_0 - \tilde\theta_0 \|^2 \right).
\]
\end{proposition}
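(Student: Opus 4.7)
The plan is to establish a one-step Lyapunov descent
\[
V(\theta_{t+1}, \tilde\theta_{t+1}) - V(\theta_t, \tilde\theta_t) \leq -\tfrac{\eta}{8}\bigl(\|\nabla f(\theta_t)\|^2 + \|\nabla f(\tilde\theta_t)\|^2\bigr),
\]
and then telescope. Summing over $t = 0, \ldots, T-1$ and using $V \geq 0$ gives $\tfrac{\eta}{8}\sum(\|\nabla f(\theta_t)\|^2 + \|\nabla f(\tilde\theta_t)\|^2) \leq V_0$; since $\eta L \leq 1/2 \leq 1$, one has $V_0 \leq f(\theta_0) + f(\tilde\theta_0) - 2f(\theta^*) + L\|\theta_0 - \tilde\theta_0\|^2$, and dividing by $T$ yields the stated bound.

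For the one-step descent I denote $a = \nabla f(\theta_t)$, $b = \nabla f(\tilde\theta_t)$, $u = \theta_t - \tilde\theta_t$, and first record the kinematic identities
\[
\theta_{t+1} - \theta_t = -\tfrac{\eta}{2}(a+b), \quad \tilde\theta_{t+1} - \tilde\theta_t = u - \eta b, \quad \theta_{t+1} - \tilde\theta_{t+1} = \tfrac{\eta}{2}(b-a).
\]
The $L$-smoothness descent lemma applied at $\theta_t$ bounds $f(\theta_{t+1}) - f(\theta_t)$ by $-\tfrac{\eta}{2}\|a\|^2 - \tfrac{\eta}{2}\langle a,b\rangle + \tfrac{L\eta^2}{8}\|a+b\|^2$; the same lemma at $\tilde\theta_t$ bounds $f(\tilde\theta_{t+1}) - f(\tilde\theta_t)$ by $(1-L\eta)\langle b,u\rangle - \eta(1-L\eta/2)\|b\|^2 + \tfrac{L}{2}\|u\|^2$; and the distance term contributes $L\|u_{t+1}\|^2 - L\|u\|^2 = \tfrac{L\eta^2}{4}\|b-a\|^2 - L\|u\|^2$. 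Weighting by $1$, $\eta L$, $1$, expanding the quadratics, and grouping produces an expression in $\|a\|^2, \|b\|^2, \|u\|^2$ together with the two cross terms $\langle a,b\rangle$ and $\langle b,u\rangle$.

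The core of the argument is absorbing these two cross-terms via Young's inequality and then checking signs. For $\langle b,u\rangle$ I plan to use a Young weight of order $1/\eta$, so that the residue on $\|u\|^2$ is of order $L$ (fitting inside the $-L\|u\|^2$ budget once the competing $\tfrac{\eta L^2}{2}\|u\|^2$ produced by the $\tilde\theta$-descent is dominated via $\eta L \leq 1/2$) while the residue on $\|b\|^2$ stays of order $\eta^2 L$. For $\langle a,b\rangle$ I use a balanced Young weight, splitting the contribution equally between $\|a\|^2$ and $\|b\|^2$, where the natural $-\tfrac{\eta}{2}\|a\|^2$ and $-\eta^2 L(1-\eta L/2)\|b\|^2$ descent terms absorb them. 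Under $\eta \leq 1/(2L)$ the residual coefficients of $\|a\|^2$ and $\|b\|^2$ are both non-positive; the target $-\eta/8$ on $\|b\|^2$ is the tight spot, since the natural $\|b\|^2$ descent is only $O(\eta^2 L)$, so if the direct calculation falls short I fall back on $\|b\|^2 \leq 2\|a\|^2 + 2L^2\|u\|^2$ to trade the shortfall against the $\|a\|^2$ and $\|u\|^2$ slack. This bookkeeping is the main obstacle; the constant $8$ is not tight, and the $\eta L \leq 1/2$ condition is calibrated precisely to make every residual coefficient negative with enough margin.
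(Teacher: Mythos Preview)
Your overall architecture is the same as the paper's: the Lyapunov function $V(\theta,\tilde\theta)=f(\theta)-f(\theta^*)+\eta L(f(\tilde\theta)-f(\theta^*))+L\|\theta-\tilde\theta\|^2$, a one-step descent of size $-\tfrac{\eta}{8}(\|a\|^2+\|b\|^2)$, and a telescoping argument. The kinematic identities, the three descent-lemma pieces, and the final bound on $V_0$ are all correct.

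The gap is in the treatment of the cross-term $\langle a,b\rangle$. Its coefficient after expanding the $\theta$-descent is $c=-\tfrac{\eta}{2}+\tfrac{L\eta^2}{4}<0$. Applying a balanced Young bound $c\langle a,b\rangle\le \tfrac{|c|}{2}(\|a\|^2+\|b\|^2)$ throws away the sign and produces a \emph{positive} contribution $(\tfrac{\eta}{4}-\tfrac{L\eta^2}{8})\|b\|^2$. Tallying the $\|b\|^2$ coefficients from all three pieces then gives $\tfrac{\eta}{4}-\tfrac{\eta^2 L}{2}=\tfrac{\eta}{4}(1-2\eta L)\ge 0$, so there is no $\|b\|^2$ descent at all for $\eta L<\tfrac12$, let alone the required $-\tfrac{\eta}{8}$. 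No unbalanced Young weight helps: making the $\|b\|^2$ residue $o(\eta)$ forces the $\|a\|^2$ residue to blow up past $\tfrac{\eta}{2}$. Your fallback $\|b\|^2\le 2\|a\|^2+2L^2\|u\|^2$ does not rescue this either: after adding back the target $\tfrac{\eta}{8}\|b\|^2$ and converting, the $\|a\|^2$ coefficient becomes $-\tfrac{\eta}{8}+2\cdot\tfrac{\eta}{8}(3-4\eta L)>0$ at $\eta L=\tfrac12$ (and worse for smaller $\eta$), so the inequality fails.

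The paper's fix is to replace Young by the polarization \emph{identity} $\langle a,b\rangle=\tfrac12(\|a\|^2+\|b\|^2-\|a-b\|^2)$. Because $c<0$, this yields $c\langle a,b\rangle=(-\tfrac{\eta}{4}+\tfrac{L\eta^2}{8})(\|a\|^2+\|b\|^2)+(\tfrac{\eta}{4}-\tfrac{L\eta^2}{8})\|a-b\|^2$, so the crucial negative $-\tfrac{\eta}{4}\|b\|^2$ is preserved; the leftover positive $\|a-b\|^2$ term is then bounded by $L^2\|u\|^2$ via smoothness and absorbed into the $-L\|u\|^2$ budget from the distance piece. With this one change (and the analogous completion-of-squares on the $\langle b,u\rangle$ term, which gives $f(\tilde\theta_{t+1})-f(\tilde\theta_t)\le\tfrac{1}{2\eta}\|u\|^2$ directly), the per-step coefficients close exactly to $-\tfrac{\eta}{8},-\tfrac{\eta}{8},-\tfrac{L}{8}$ and your telescoping finishes the proof.
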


\begin{proposition}[SGD]
Under the same assumption as above, suppose $\theta_0 = \tilde\theta_0$ and we perform:
\[
\theta_{t+1} = \theta_t - \tfrac{\eta}{2}(g_t + \tilde{g}_t), \quad \tilde\theta_{t+1} = \theta_t - \eta \tilde{g}_t,
\]
where $g_t$ and $\tilde{g}_t$ are unbiased, conditionally independent estimators of $\nabla f(\theta_t)$ and $\nabla f(\tilde\theta_t)$, respectively, with bounded variance $\sigma^2$. Then, for $\eta \leq \frac 1{2L}$ and any $T \geq 1$,
\[
\frac{1}{T} \sum_{t=0}^{T-1} \mathbb{E} \left[ \| \nabla f(\theta_t) \|^2 + \| \nabla f(\tilde\theta_t) \|^2 \right] 
\leq \frac{16}{\eta T} ( f(\theta_0) - f(\theta^*) ) + 8\sigma^2 L\eta.
\]
\end{proposition}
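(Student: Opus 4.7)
The approach is to show that the proposed $V(\theta,\tilde\theta)$ acts as a stochastic Lyapunov function, i.e. to establish a one-step descent of the form
\[
\mathbb{E}[V(\theta_{t+1},\tilde\theta_{t+1})\mid\mathcal{F}_t] \leq V(\theta_t,\tilde\theta_t) - c_1\,\eta\bigl(\|\nabla f(\theta_t)\|^2 + \|\nabla f(\tilde\theta_t)\|^2\bigr) + c_2\,\eta^2 L\sigma^2,
\]
where $\mathcal{F}_t$ is the natural filtration, and then telescope. The deterministic skeleton is precisely the proof of the GD proposition (which presumably already uses the same $V$), so I would mirror that argument and only track the extra variance contributions. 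Because $\theta_0=\tilde\theta_0$, the $L\|\theta_0-\tilde\theta_0\|^2$ term vanishes and $V(\theta_0,\tilde\theta_0) = (1+\eta L)(f(\theta_0)-f(\theta^*)) \leq \tfrac{3}{2}(f(\theta_0)-f(\theta^*))$ under $\eta\leq \tfrac{1}{2L}$, which is what will produce the constants $16$ and $8$ in the target bound.

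The core computation is three applications of $L$-smoothness evaluated at $\theta_t$. For $f(\theta_{t+1})$, expansion yields a linear term $-\tfrac{\eta}{2}\langle \nabla f(\theta_t), g_t + \tilde g_t\rangle$ and a quadratic term $\tfrac{L\eta^2}{8}\|g_t + \tilde g_t\|^2$; for $f(\tilde\theta_{t+1})$, the corresponding terms are $-\eta\langle \nabla f(\theta_t),\tilde g_t\rangle$ and $\tfrac{L\eta^2}{2}\|\tilde g_t\|^2$ (both updates start from $\theta_t$, which is key); and for the coupling, direct computation gives $\|\theta_{t+1}-\tilde\theta_{t+1}\|^2 = \tfrac{\eta^2}{4}\|g_t-\tilde g_t\|^2$. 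Taking conditional expectation given $\mathcal{F}_t$, unbiasedness turns each linear term into an inner product of true gradients, and the conditional independence of $g_t$ and $\tilde g_t$ ensures that both $\mathbb{E}[\|g_t\pm\tilde g_t\|^2\mid\mathcal{F}_t]$ equal $\|\nabla f(\theta_t)\pm\nabla f(\tilde\theta_t)\|^2 + 2\sigma^2$. This is precisely where the stochastic case departs from the deterministic one: every squared-norm contribution picks up an extra $2\sigma^2$ that will eventually aggregate into the $O(\eta^2 L \sigma^2)$ noise floor.

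Assembling $V$ with the weights $(1,\eta L, L)$ on its three components, the cross term $\langle \nabla f(\theta_t),\nabla f(\tilde\theta_t)\rangle$ appearing inside $\|\nabla f(\theta_t)+\nabla f(\tilde\theta_t)\|^2$ and inside the linear contribution from $f(\tilde\theta_{t+1})$ must cancel after weighting --- this is the delicate algebraic step that justifies the Lyapunov choice and exploits the specific weight $\eta L$ on $f(\tilde\theta)-f(\theta^*)$. Once it does, one is left with negative multiples of $\|\nabla f(\theta_t)\|^2$ and $\|\nabla f(\tilde\theta_t)\|^2$ of order $\eta$, plus positive quadratic residues of order $\eta^2 L$ that are absorbed via $\eta\leq\tfrac{1}{2L}$, plus the variance contribution $O(\eta^2 L\sigma^2)$. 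Telescoping, taking total expectation, and dividing by $T$ then yields the announced bound. I expect the cross-term cancellation and the bookkeeping of the three $\sigma^2$ contributions (coming from $\|g_t+\tilde g_t\|^2$, $\|\tilde g_t\|^2$, and $\|g_t-\tilde g_t\|^2$, weighted by $1$, $\eta L$, and $L$ respectively) to be the main obstacle; everything else is routine smoothness-plus-variance accounting.
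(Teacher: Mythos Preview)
Your high-level plan is right and matches the paper: show a one-step expected decrease of the Lyapunov function $V$, then telescope, using $V(\theta_0,\tilde\theta_0)\leq\tfrac{3}{2}(f(\theta_0)-f(\theta^*))$ when $\theta_0=\tilde\theta_0$. Your variance bookkeeping (each $\|g_t\pm\tilde g_t\|^2$ and $\|\tilde g_t\|^2$ splitting into a deterministic part plus a $\sigma^2$ contribution) is also exactly what the paper does.

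The gap is in the treatment of $f(\tilde\theta_{t+1})$. You expand it by smoothness at $\theta_t$, obtaining $f(\tilde\theta_{t+1})\leq f(\theta_t)-\eta\langle\nabla f(\theta_t),\tilde g_t\rangle+\tfrac{L\eta^2}{2}\|\tilde g_t\|^2$. But the Lyapunov difference needs $f(\tilde\theta_{t+1})-f(\tilde\theta_t)$, so this leaves an unaccounted residual $\eta L\bigl(f(\theta_t)-f(\tilde\theta_t)\bigr)$, which you never mention. Moreover, with this expansion there is no direct $-\|\nabla f(\tilde\theta_t)\|^2$ term anywhere; you must extract it entirely from the cross term $\langle\nabla f(\theta_t),\nabla f(\tilde\theta_t)\rangle$. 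That cross term does \emph{not} cancel after weighting (its leading coefficient is $-\tfrac{\eta}{2}$, unaffected by the $\eta L$ weight), so it must be polarized into $\|\nabla f(\theta_t)-\nabla f(\tilde\theta_t)\|^2\leq L^2\|\theta_t-\tilde\theta_t\|^2$, and the residual $\eta L(f(\theta_t)-f(\tilde\theta_t))$ must also be bounded via smoothness plus Young's inequality. If you carry this out you will find that, under $\eta\leq\tfrac{1}{2L}$, the two constraints (keeping the $\|\nabla f(\tilde\theta_t)\|^2$ coefficient $\leq -c\eta$ and the $\|\theta_t-\tilde\theta_t\|^2$ coefficient $\leq 0$) cannot be satisfied simultaneously; a strictly smaller step size is needed, so you would not recover the stated proposition.

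The paper avoids this by expanding $f(\tilde\theta_{t+1})$ by smoothness at $\tilde\theta_t$ instead. This immediately yields a $-\eta\|\nabla f(\tilde\theta_t)\|^2$ term, and the awkward increment $\tilde\theta_{t+1}-\tilde\theta_t=(\theta_t-\tilde\theta_t)-\eta\tilde g_t$ is handled by the identity
\[
\nabla f(\tilde\theta_t)^\top(u-\eta\nabla f(\tilde\theta_t))+\tfrac{L}{2}\|u-\eta\nabla f(\tilde\theta_t)\|^2
=\tfrac{1}{2\eta}\|u\|^2-\tfrac{\eta}{2}\|\nabla f(\tilde\theta_t)\|^2+\bigl(\tfrac{L}{2}-\tfrac{1}{2\eta}\bigr)\|u-\eta\nabla f(\tilde\theta_t)\|^2
\]
with $u=\theta_t-\tilde\theta_t$. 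For $\eta\leq\tfrac{1}{2L}$ the last term is nonpositive, so $f(\tilde\theta_{t+1})-f(\tilde\theta_t)\leq\tfrac{1}{2\eta}\|\theta_t-\tilde\theta_t\|^2$ (plus noise). The weight $\eta L$ in $V$ is chosen precisely to turn this $\tfrac{1}{2\eta}$ into $\tfrac{L}{2}$, which the coupling term $-L\|\theta_t-\tilde\theta_t\|^2$ then absorbs. This is the ``delicate algebraic step'' you were looking for, but it lives in the $f(\tilde\theta_{t+1})$ bound, not in a cancellation of gradient cross terms.
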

We note that these rates recover the standard convergence guarantees of GD and SGD, unlike those in \citep{Wang2020SlowMo}. Indeed, unlike DPU or WP, \acco~does not rely on an approximation, which leads to a cleaner analysis and faster convergence, as reflected in our proof strategy. One can interpret DPU (with SGD as the optimizer \texttt{Opt}) as a parallel version of Delayed-SGD (D-SGD) with a one-step delay. While this setup has been shown to preserve asymptotic convergence rates in convex settings---such as quadratics \citep{arjevani20a} and strongly or quasi-convex functions \citep{EFStich2020}---our experiments (Sec.~\ref{sec:DelayedExpe}) show that, in practice, this delay significantly degrades performance when training LLMs with AdamW.  In contrast, \acco~completely avoids delayed gradients, eliminating the impact of staleness. We note that the batch size in ACCO corresponds to the number of samples processed between two successive updates of $\theta$. Although it uses a pair of stochastic gradients, $\nabla F(\theta^{(t)},\xi^{(t)})$ and $\nabla F(\tilde \theta^{(t)},\tilde \xi^{(t)})$, both are computed synchronously with respect to $\theta^{(t)}$ (see Fig.~\ref{two_steps_mechanism}). We confirm this advantage in Sec.~\ref{sec:expe}, where \acco~yields training curves nearly indistinguishable from those of DDP (see Figs.~\ref{WP}, \ref{DPU_not_working}, \ref{expe:Owt}).


\section{Experiments}\begin{wrapfigure}[12]{r}{0.5\textwidth}
     \vspace{-30pt}
     \begin{center}
          \includegraphics[width=0.5\columnwidth]{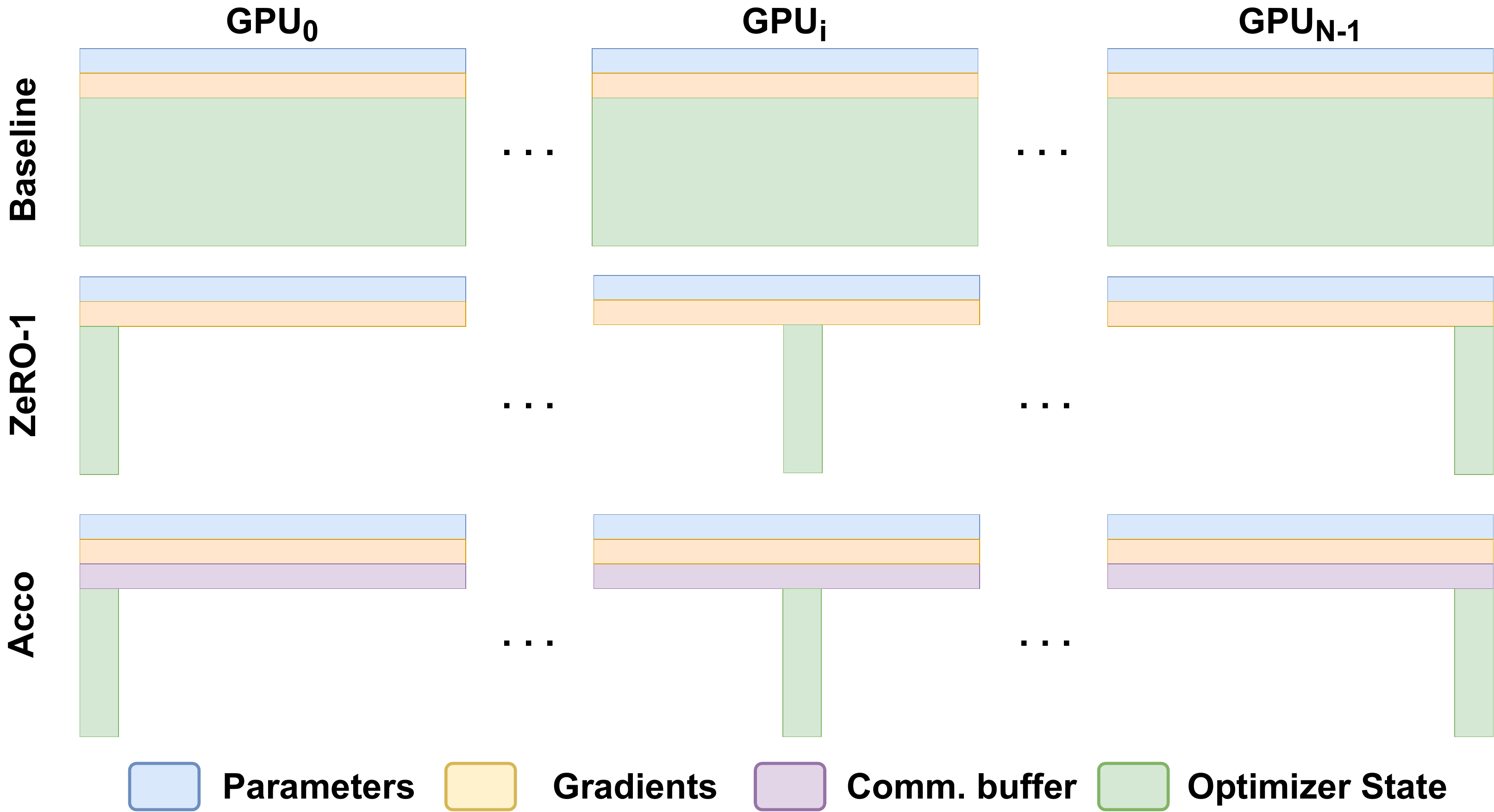}
    \end{center}
    \vspace{-6pt}
    \caption{Memory requirements of \acco~ vs DDP and ZeRO-1, see Tab.\ref{tab:relatedwork} for quantitative details.}
    \label{memfig}
\end{wrapfigure}
\label{sec:expe}
In this section, we present our experiments. Section~\ref{sec:setup} details the shared experimental setup. In Sec.~\ref{sec:tinystories}, we demonstrate the shortcomings of DPU and WP---initially discussed in Sec.~\ref{sec:method}---which motivate the design of \acco. This initial analysis focuses on small language models and datasets, using \textit{TinyStories}~\citep{eldan2023tinystories} as a testbed. Sec.~\ref{sec:owt} shows that \acco~scales effectively by training a 125M-parameter GPT-Neo~\citep{gpt-neo} on OpenWebText~\citep{Gokaslan2019OpenWeb}. Sec.~\ref{sec:finetuning} pushes further with instruction tuning of a 2.7B GPT-Neo model, emphasizing communication bottlenecks and the benefits of \acco. Finally, Sec.~\ref{sec:heterogeneous} compares \acco~and DDP on heterogeneous hardware, where \acco~lets faster GPUs accumulate updates while waiting---unlike DDP---resulting in faster gradient computation.

\subsection{Empirical motivation for~\acco}
\label{sec:motivations}

\begin{wrapfigure}[10]{r}{0.46\textwidth}
    \vspace{-57pt}  
    \begin{center}
        \includegraphics[width=0.45\textwidth]{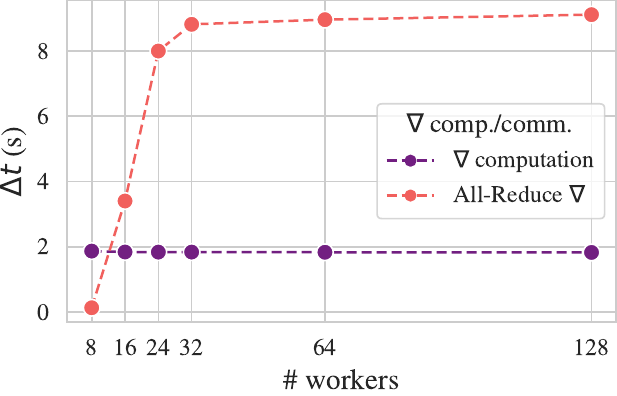}
    \end{center}
    \vspace{-15pt}
    \caption{Time (per worker) spent computing and averaging gradients of a Llama-2 7B model for different numbers of GPUs.}
    \label{fig:speed}
\end{wrapfigure}
We first illustrate that the time spent communicating gradients can quickly trump the one used for computing them when using standard AdamW DDP to train LLMs. For that, we measure the time necessary to perform a full backward pass on a Llama-2 model \citep{touvron2023llama} with 7B parameters hosted on a single GPU, using a batch size maxing out its memory. We compare this to the time necessary to compute an All-Reduce on those gradients with  NCCL, varying the number of distributed workers. We experiment on our local cluster of NVIDIA A100-80GB GPUs with 8 GPUs per node and an Omni-PAth interconnection network at 100 Gb/s for inter-node connections, intra-node connections being done with NVLink 300 GB/s. Each  worker is hosted on a single GPU. Fig. \ref{fig:speed} shows that the communication time outside of a GPU node in our cluster to average the gradients across workers can take more than 
 $4\times$ the one spent on the whole forward and backward step. As DDP only partially hides communications during the backward \citep{PytorchDDP2020}, this means that our GPUs remain idle the majority of the time when we use more than 24 distributed workers, motivating the need for methods leveraging this time to compute.

\begin{figure*}[ht]
     \begin{center}
     \subfigure[Training with the specified amount in \citep{eldan2023tinystories}.]{
          \includegraphics[width=.45\columnwidth]{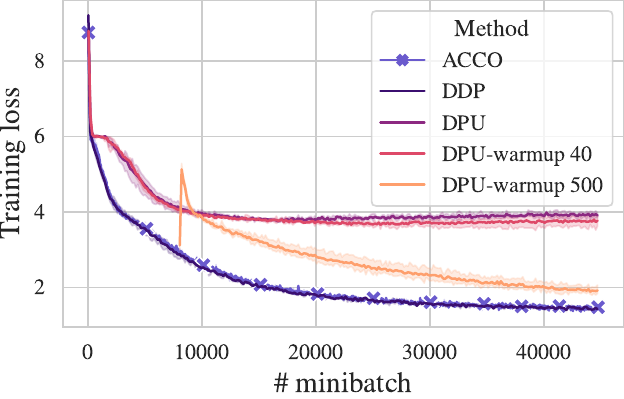}
        }
        \subfigure[Training for twice the specified amount.]{
          \includegraphics[width=.45\columnwidth]{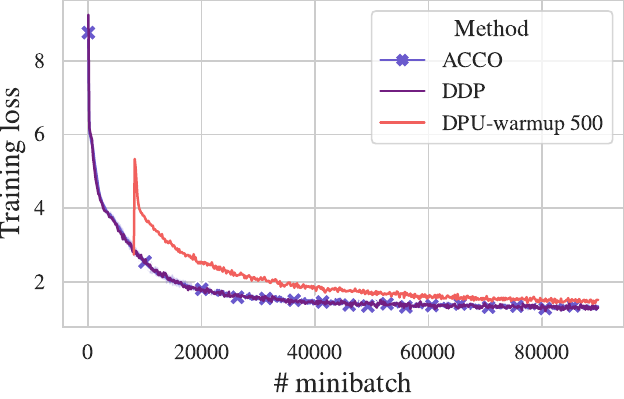}
        }
    \end{center}
    \vspace{-12pt}
    \caption{Impact of the delayed update and  warmup steps.}
    \label{DPU_not_working}
    \vspace{-10pt}
\end{figure*}

\subsection{Experimental setup}
\label{sec:setup}
Our experiments are performed on the GPU cluster described in Sec. \ref{sec:motivations}. A detailed pseudo-code for \acco~can be found in App. \ref{sec:algo}. Our code is in PyTorch \citep{Pytorch}, and we verified that our implementation produces two different  streams running in parallel for the computations and communications using NVIDIA's Nsight System to profile it, shown in Fig. \ref{profile}. We trained all our models with AdamW \citep{loshchilov2018decoupled}, using mixed precision: our model parameters, gradient accumulation  and communication buffers are in \texttt{bfloat16} \citep{bf16kalamkar2019} while our sharded optimizer states are in single precision (see Fig. \ref{memfig}). As nowadays all distributed frameworks training LLMs at scale use a form of partitioning due to GPU memory constraints \citep{DeepSpeed, gpt-neox-library}, our main baseline is PyTorch's DDP~\citep{PytorchDDP2020} with ZeRO-1 \citep{Zero1} to shard the optimizer's state. As justified in Tab. \ref{tab:relatedwork}, local optimization methods cannot be realistically considered for memory reasons. To compare in good faith DPU to \acco~in terms of wall-clock time, we also implemented our own version of DPU, as  the implementation \citep{DPUgithub} solves a different problem than ours. The original ZeRO does not overlap computation and communication as it is designed to host the optimizer on the CPU, and is slower than ZeRO due to  CPU and GPU memory transfers  \citep{Zerooffload}.

\subsection{\acco~on TinyStories}
\label{sec:tinystories}
\begin{wrapfigure}[11]{r}{0.45\textwidth}
\vspace{-15pt}     \begin{center}
\includegraphics[width=.45\columnwidth]{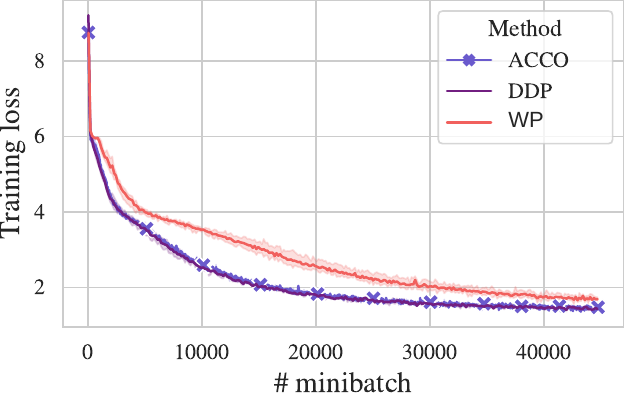}
    \end{center}
    \vspace{-12pt}
    \caption{\acco~vs. WP on TinyStories}
    \label{WP}
\end{wrapfigure}
We experiment with small language models on the TinyStories dataset \citep{eldan2023tinystories}, closely following their configuration and training hyperparameters. We use a 36M-parameter GPT-Neo-based \citep{gpt-neo} decoder-only transformer and train a BPE tokenizer on TinyStories to match their 10k vocabulary. All experiments are run with 8 workers on a single node.

\paragraph{Impact of delayed updates.}
\label{sec:DelayedExpe}
First, we investigate the impact of using delayed updates, re-purposing DPU \citep{Zerooffload} as described in Sec. \ref{sec:method}. We run three variants of this algorithm: \textbf{(1)} with no warmup, \textbf{(2)} with 40 warmup steps of non-delayed optimization step before switching to DPU (done in \citep{Zerooffload}), and \textbf{(3)} with 500 steps of warmup. We report in Fig. \ref{DPU_not_working} our training losses on 8 distributed workers averaged over $3$ runs.
Using delayed updates greatly hurts convergence, especially when no or too few warmup steps are performed. Surprisingly, the number of warmup steps given in \citep{Zerooffload} does not work here, hinting that it is a sensitive hyper-parameter to tune for each use-case. If we train for twice as long as specified in \citep{eldan2023tinystories}, then the DPU training curve approaches the baseline one, without totally catching it. Contrary to this, the training curve of our algorithm \acco~perfectly matches DDP, as advocated by our theory. 
\paragraph{Compensation via WP.}\label{sec:WPExpe} To mitigate the detrimental impact of using delayed updates, we test a first approach to mitigate it: WP as described in Sec. \ref{sec:method}. This method applies two consecutive optimizer steps, re-using twice the same mini-batch. The first step produces the usual updated parameters, while the second predicts the parameters of the next step so that gradients can be computed on this estimate rather than on a stale version of the model. In Fig. \ref{WP}, we compare the training curves of this delay-compensation method to ours. We remark that, while \acco~perfectly matches the DDP baseline at all times, WP displays worse behavior, especially at the beginning of the training. Thus, we dismiss this method and keep ours for the remaining experiments.

\begin{figure}[ht]

     \begin{center}
     \subfigure[Evolution of the loss over the whole training.]{
          \includegraphics[width=.45\columnwidth]{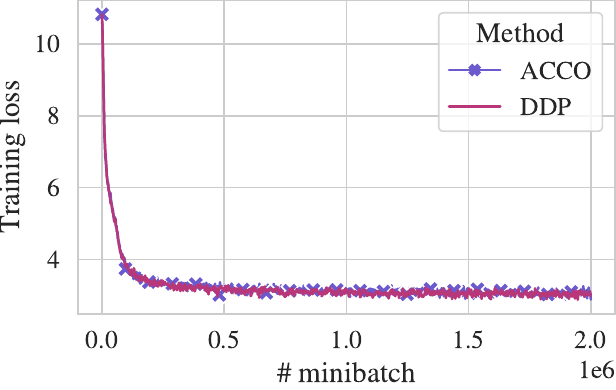}
        }
        \subfigure[Focus on the first part of the training w.r.t time.]{
          \includegraphics[width=.45\columnwidth]{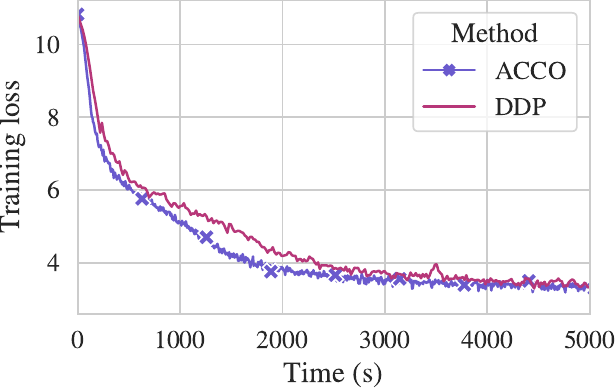}
        }
    \end{center}
    \vspace{-12pt}
    \caption{Training curves for \acco~and DDP with 32 workers trained for 50B tokens.}
     \label{expe:Owt}
\end{figure} 
\subsection{Training GPT-Neo on OpenWebText}
\label{sec:owt}

To assess how \acco~scales with larger models and more data, we pre-trained a model equivalent to GPT-2 \citep{GPT2} with both \acco~and DDP with a ZeRO optimizer. Specifically, we used the GPT-Neo architecture \citep{gpt-neo} with 125 million parameters and the OpenWebText dataset \citep{Gokaslan2019OpenWeb}, which contains 40 GB of text. We used the GPT-Neo tokenizer, pre-trained on the Pile dataset \citep{gao2020pile}. The models were trained on sequences of 1024 tokens, with documents concatenated using end-of-sequence tokens. To assess the impact of using different hardware, the experiment was repeated on 2 different clusters. The first was conducted on 8 H100-PCIe 80GB on a single node. The second was on 32 A100-80G GPU distributed on 4 nodes. We maxed out the memory of our GPUs with a local mini-batch size of 24. To reach a sufficiently large overall batch size, we used 1 step of gradient accumulation for DDP, and none for \acco~as our method naturally accumulates over 1 step, resulting for the first and second experiments in respectively 400K and 1.5M tokens per effective batch for both \acco~and DDP. In Tab. \ref{tab:times}, we report additional experimental details, and notice that training with \acco~allows for a 25\% speedup on this pre-training task, which is additionally illustrated in Fig. \ref{expe:Owt}.  We also report that our implementation of \acco~adaptively scheduled 315 supplementary accumulation steps over the whole training to prevent GPUs from idling while waiting for communications. \begin{wraptable}[7]{r}{0.5\textwidth}
\vspace{-15pt}
    \centering
    \caption{Perplexity of our trained LLMs}
        \resizebox{0.5\columnwidth}{!}{

\begin{small}
    \begin{tabular}{l|c|c}
    \toprule
        \textbf{Method} &  \textbf{LAMBADA (ppl $\downarrow
$)} & \textbf{OpenWebText (ppl $\downarrow
$)} \\
        \midrule
        \acco~1x8  & 47.1 & 24.2 \\
        DDP 1x8 & 47.5 &  24.3\\ 
        \midrule
        \acco~4x8  & 45.5 & 22.5 \\
        DDP 4x8 & 44.1 & 21.7 \\  \bottomrule     
    \end{tabular}
    \label{tab:evaluation_metrics}
    \end{small}}
\end{wraptable} Further details and results for the H100 experiment can be found in App. \ref{sec:expedetails}.   Tab. \ref{tab:evaluation_metrics} reports the perplexity of trained language models with both methods. We evaluate the perplexity of language models on LAMBADA \citep{paperno-EtAl:2016:P16-1} and a test split of OpenWebText,  and report similar results for both methods.

\subsection{\acco~for instruction fine-tuning}
\label{sec:finetuning}
In the former sections, we compared \acco~ against DDP with ZeRO in the pre-training stage. To further validate our algorithm, we consider the GPT-Neo 2.7B model \citep{gpt-neo} pre-trained on the Pile dataset \citep{gao2020pile} and finetuned it on the Alpaca dataset \citep{alpaca} containing 52k pairs of instruction/answer. We fine-tuned the model using two configurations: 8 A100-80G on a single node, and 8 A100-80G distributed equally across 2 nodes. Samples are padded to match the longest sequence in the mini-batch. We fixed the mini-batch size at 4, leading to a total batch size of 128 for all methods. For DDP and DPU, we used a gradient accumulation of 4, while for \acco~, a gradient accumulation of 2 to account for the \acco~ accumulation described in Sec. \ref{ACCO_method}. The learning rate was set to $2 \times 10^{-5}$, and with a warmup of 50 steps for DPU. In this setting, padding to the longest sequence in the mini-batch induces more variability in the number of tokens per mini-batch. This results in more variability in the computational load for each worker, leading to increased wait times for synchronization.
We observe in Fig. \ref{expe:ft-1-2-node} that \acco~ achieves a low validation loss faster than DDP in both settings. Notably, the difference between \acco~ DDP becomes more pronounced when workers are distributed across multiple nodes. Additionally, as shown in Tab. \ref{tab:times}, larger models and optimizers result in longer communication times, further demonstrating the efficiency of \acco~ in mitigating communication bottlenecks. This advantage translates to an 87\% speedup for \acco~(see Tab. \ref{tab:times}), highlighting the significant impact of communication bottlenecks on standard methods.

\begin{wraptable}[9]{r}{0.6\textwidth}
\vspace{-18pt}
\centering
\caption{Pre-training (PT) and finetuning (FT) time speedup with \acco~ against DDP on various setups with  GPT-Neo. }
\begin{small}

\begin{tabular}{llll|lll}
\toprule
\textbf{Stage}                       & \textbf{Model} &  \textbf{GPUs}   &\textbf{\#tokens} & \textbf{ZeRO-1}  & \acco~ \\
\midrule
\multirow{2}{*}{\textbf{PT}} & \multirow{2}{*}{125M} & 1x8 &6B& 4h41m    & 4h25m \\

                            & & 4x8  &50B& 14h41m &  10h55m\\
\midrule
\multirow{2}{*}{\textbf{FT}} & \multirow{2}{*}{2.7B} & 1x8&  80M  & 43min & 25min\\
                            & & 2x4 & 80M & 3h46m  & 29min\\

\bottomrule
\end{tabular} 
\end{small}
\label{tab:times}
\vspace{-15pt}
\end{wraptable}


\begin{figure*}[ht]
     \begin{center}
          \includegraphics[width=0.45\columnwidth]{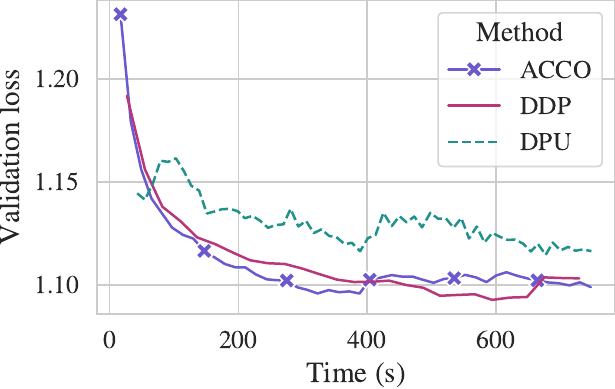}
          \includegraphics[width=.45\columnwidth]{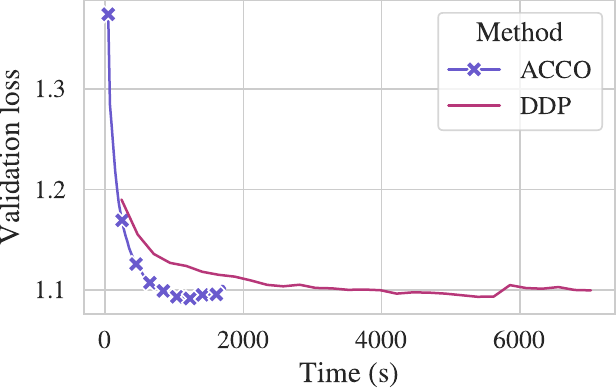}
    \end{center}
    \vspace{-12pt}
    \caption{Validation curve with 8 workers on 1 node \textbf{(left)}, and 4 workers/node on 2 nodes \textbf{(right)}.}
    \label{expe:ft-1-2-node}
\end{figure*}

\subsection{Experiment Using Heterogeneous Devices}
\label{sec:heterogeneous}
To witness the impact of using heterogeneous devices, we run \acco~and compare it to DDP in a four workers setting, with one of the GPU four times slower than the other three. The training setting is the same as in Sec. \ref{sec:tinystories}. As we experiment on a A100 GPUs cluster, we simulate the heterogeneity of the hardware using the \texttt{time.sleep()} python command. First, we measure the time that a standard forward-backward step takes, and make one of the four GPUs idle for three times this amount after each forward-backward pass. In this context, DDP is only as fast as the slowest worker: 3 out of the 4 workers are idle 3/4 of the time. With \acco, the other workers accumulate during the time they wait for the slow one to finish. Thus, \acco~ allows to compute gradients for large batch sizes faster than standard baselines, resulting significantly smaller wall clock time, as shown in Fig. \ref{expe:slow}.

\begin{figure*}[ht]
     \begin{center}
          \includegraphics[width=0.45\columnwidth]{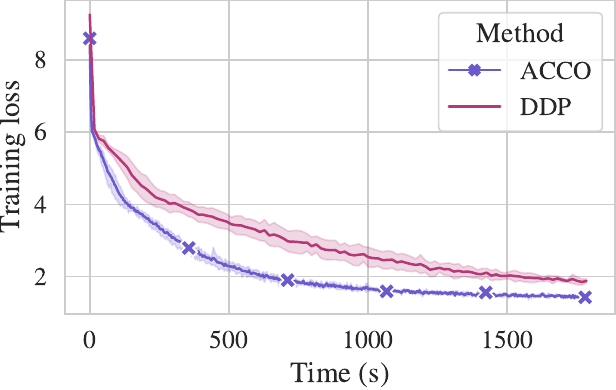}
          \includegraphics[width=0.45\columnwidth]{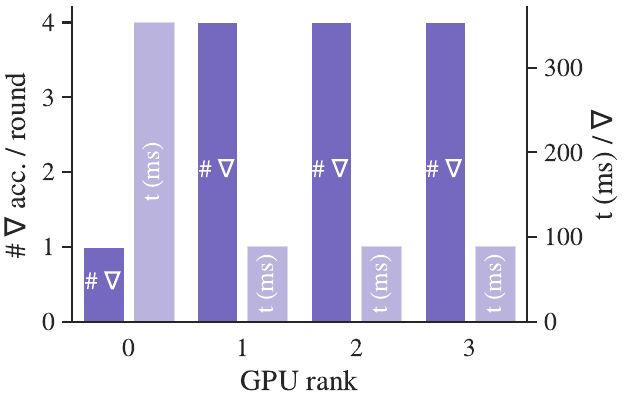}
    \end{center}\vspace{-12pt}
    \caption{Training curves with 1 slow worker ($4 \times$ slower).}
    \label{expe:slow}
\end{figure*}


\section*{Conclusion}
\acco~ is a novel algorithm that addresses both memory and communication bottlenecks in distributed LLM training, with provable guarantees which match standard SGD. By overlapping gradient computation and communication while partitioning optimizer states, \acco~ reduces communication overhead in a memory-efficient way. A new two-stage compensation mechanism corrects for delayed updates, ensuring convergence comparable to standard optimizers—without requiring warmup. Empirical results show significant speedups across pre-training and fine-tuning tasks, particularly in multi-node and heterogeneous environments.

\section*{Acknowledgements}
This work was supported by Project ANR-21-CE23-0030 ADONIS, EMERG-ADONIS from Alliance SU, PEPR IA (grant SHARP ANR-23-PEIA-0008), and the Sorbonne Center for Artificial Intelligence (SCAI) of Sorbonne University (IDEX SUPER 11-IDEX-0004). It was granted access to the AI resources of IDRIS under the allocation 2023-A0151014526 made by GENCI.
EB acknowledges funding from FRQNT New Scholar and computational support from CFI.



\bibliographystyle{plain}

\bibliography{biblio}


\appendix

\newpage\section*{NeurIPS Paper Checklist}

\begin{enumerate}

\item {\bf Claims}
    \item[] Question: Do the main claims made in the abstract and introduction accurately reflect the paper's contributions and scope?
    \item[] Answer: \answerYes{}
    \item[] Justification: The introduction and abstract clearly state the goals and contributions of \acco, which are substantiated through both theoretical analysis (Section~\ref{sec:method}) and experiments (Sections~\ref{sec:tinystories}--\ref{sec:heterogeneous}).
    \item[] Guidelines:
    \begin{itemize}
        \item The answer NA means that the abstract and introduction do not include the claims made in the paper.
        \item The abstract and/or introduction should clearly state the claims made, including the contributions made in the paper and important assumptions and limitations. A No or NA answer to this question will not be perceived well by the reviewers. 
        \item The claims made should match theoretical and experimental results, and reflect how much the results can be expected to generalize to other settings. 
        \item It is fine to include aspirational goals as motivation as long as it is clear that these goals are not attained by the paper. 
    \end{itemize}

\item {\bf Limitations}
    \item[] Question: Does the paper discuss the limitations of the work performed by the authors?
    \item[] Answer: \answerYes{}
    \item[] Justification: For instance, Table 1 discusses the tradeoffs of each method.
    \item[] Guidelines:
    \begin{itemize}
        \item The answer NA means that the paper has no limitation while the answer No means that the paper has limitations, but those are not discussed in the paper. 
        \item The authors are encouraged to create a separate ``Limitations'' section in their paper.
        \item The paper should point out any strong assumptions and how robust the results are to violations of these assumptions (e.g., independence assumptions, noiseless settings, model well-specification, asymptotic approximations only holding locally). The authors should reflect on how these assumptions might be violated in practice and what the implications would be.
        \item The authors should reflect on the scope of the claims made, e.g., if the approach was only tested on a few datasets or with a few runs. In general, empirical results often depend on implicit assumptions, which should be articulated.
        \item The authors should reflect on the factors that influence the performance of the approach.
        \item The authors should discuss the computational efficiency of the proposed algorithms and how they scale with dataset size.
        \item If applicable, the authors should discuss possible limitations of their approach to address problems of privacy and fairness.
    \end{itemize}

\item {\bf Theory assumptions and proofs}
    \item[] Question: For each theoretical result, does the paper provide the full set of assumptions and a complete (and correct) proof?
    \item[] Answer: \answerYes{}
    \item[] Justification: Section~\ref{sec:method} provides the assumptions and sketches of the proofs; full convergence analysis and Lyapunov function details are referenced and further elaborated in the appendix.
    \item[] Guidelines:
    \begin{itemize}
        \item The answer NA means that the paper does not include theoretical results. 
        \item All the theorems, formulas, and proofs in the paper should be numbered and cross-referenced.
        \item All assumptions should be clearly stated or referenced in the statement of any theorems.
        \item Proofs can appear in the main paper or the supplemental material; if in supplemental material, include a short proof sketch in the main text.
        \item Theorems and Lemmas relied upon should be properly referenced. 
    \end{itemize}

\item {\bf Experimental result reproducibility}
    \item[] Question: Does the paper fully disclose all the information needed to reproduce the main experimental results of the paper (regardless of whether the code and data are provided or not)?
    \item[] Answer: \answerYes{}
    \item[] Justification: Section~\ref{sec:setup} describes the full environment and configurations, including model architecture, hardware specs, optimizer, precision formats, and batch sizes. Details are further supported by Appendix sections and figures.
    \item[] Guidelines:
    \begin{itemize}
        \item The answer NA means that the paper does not include experiments.
        \item If the contribution is a dataset and/or model, the authors should describe the steps taken to make results reproducible or verifiable. 
        \item Depending on the contribution, reproducibility may require full architecture details, commands, or hosted access; see NeurIPS policy for options.
    \end{itemize}

\item {\bf Open access to data and code}
    \item[] Question: Does the paper provide open access to the data and code, with sufficient instructions to faithfully reproduce the main experimental results, as described in supplemental material?
    \item[] Answer: \answerYes{}
    \item[] Justification: The paper states that the code will be released in a public open-source repository at publication time (end of Introduction). All datasets used are publicly available.
    \item[] Guidelines:
    \begin{itemize}
        \item See NeurIPS code/data submission guidelines (\url{https://nips.cc/public/guides/CodeSubmissionPolicy}).
        \item Include exact commands and environment when feasible; anonymize at submission time if applicable.
    \end{itemize}

\item {\bf Experimental setting/details}
    \item[] Question: Does the paper specify all the training and test details (e.g., data splits, hyperparameters, how they were chosen, type of optimizer, etc.) necessary to understand the results?
    \item[] Answer: \answerYes{}
    \item[] Justification: Sections~\ref{sec:setup}, \ref{sec:tinystories}, \ref{sec:owt}, and \ref{sec:finetuning} specify training configurations, datasets, optimizer, batch sizes, accumulation strategies, and evaluation metrics.
    \item[] Guidelines:
    \begin{itemize}
        \item Core details should appear in the paper; full details can be in code, appendix, or supplemental material.
    \end{itemize}

\item {\bf Experiment statistical significance}
    \item[] Question: Does the paper report error bars suitably and correctly defined or other appropriate information about the statistical significance of the experiments?
    \item[] Answer: \answerYes{}
    \item[] Justification: For TinyStories experiments, results are averaged over 3 runs (see Section~\ref{sec:tinystories}); trends are visualized. For large-scale runs, we show consistent convergence behavior.
    \item[] Guidelines:
    \begin{itemize}
        \item Clearly state what variability factors error bars capture and how they are computed.
        \item Clarify whether bars are standard deviation or standard error; use asymmetric bars when appropriate.
    \end{itemize}

\item {\bf Experiments compute resources}
    \item[] Question: For each experiment, does the paper provide sufficient information on the computer resources (type of compute workers, memory, time of execution) needed to reproduce the experiments?
    \item[] Answer: \answerYes{}
    \item[] Justification: Section~\ref{sec:setup} specifies compute resources including GPU model, memory, interconnect bandwidths, and training durations (see also Table~\ref{tab:times}).
    \item[] Guidelines:
    \begin{itemize}
        \item Indicate worker type (CPU/GPU), cluster/cloud details, memory/storage.
        \item Provide per-run and total compute estimates when feasible.
    \end{itemize}

\item {\bf Code of ethics}
    \item[] Question: Does the research conducted in the paper conform, in every respect, with the NeurIPS Code of Ethics \url{https://neurips.cc/public/EthicsGuidelines}?
    \item[] Answer: \answerYes{}
    \item[] Justification: The research uses only publicly available datasets, complies with best practices for reproducibility, and does not raise ethical concerns.
    \item[] Guidelines:
    \begin{itemize}
        \item If ``No'', explain the special circumstances requiring deviation; preserve anonymity as required.
    \end{itemize}

\item {\bf Broader impacts}
    \item[] Question: Does the paper discuss both potential positive societal impacts and negative societal impacts of the work performed?
    \item[] Answer: \answerNA{}
    \item[] Justification: This work proposes a training algorithm and does not directly deploy an application with societal impacts; we discuss potential implications where relevant.
    \item[] Guidelines:
    \begin{itemize}
        \item If NA or No, explain why impacts are out of scope.
        \item If applicable, discuss misuse risks and possible mitigations.
    \end{itemize}

\item {\bf Safeguards}
    \item[] Question: Does the paper describe safeguards that have been put in place for responsible release of data or models that have a high risk for misuse (e.g., pretrained language models, image generators, or scraped datasets)?
    \item[] Answer: \answerNA{}
    \item[] Justification: The paper does not release high-risk assets; standard benchmarks and models are reused under permissible terms.
    \item[] Guidelines:
    \begin{itemize}
        \item If releasing high-risk assets, describe safeguards (access controls, filters, usage guidelines).
    \end{itemize}

\item {\bf Licenses for existing assets}
    \item[] Question: Are the creators or original owners of assets (e.g., code, data, models), used in the paper, properly credited and are the license and terms of use explicitly mentioned and properly respected?
    \item[] Answer: \answerYes{}
    \item[] Justification: All datasets (OpenWebText, Alpaca, TinyStories) and baselines (e.g., GPT-Neo, ZeRO) are cited with sources/authors and licenses where applicable (Sections~\ref{sec:tinystories}, \ref{sec:owt}, and references).
    \item[] Guidelines:
    \begin{itemize}
        \item Cite originals, state versions/URLs, and include license names (e.g., CC-BY 4.0).
    \end{itemize}

\item {\bf New assets}
    \item[] Question: Are new assets introduced in the paper well documented and is the documentation provided alongside the assets?
    \item[] Answer: \answerNA{}
    \item[] Justification: No new datasets or models are released in this work.
    \item[] Guidelines:
    \begin{itemize}
        \item If releasing assets, include documentation, license, limitations, consent, and anonymization at submission time if needed.
    \end{itemize}

\item {\bf Crowdsourcing and research with human subjects}
    \item[] Question: For crowdsourcing experiments and research with human subjects, does the paper include the full text of instructions given to participants and screenshots, if applicable, as well as details about compensation (if any)? 
    \item[] Answer: \answerNA{}
    \item[] Justification: This paper does not involve human subjects or crowdsourcing.
    \item[] Guidelines:
    \begin{itemize}
        \item If central to the paper, include as much detail as possible in the main text; compensation should meet local minimum wage.
    \end{itemize}

\item {\bf Institutional review board (IRB) approvals or equivalent for research with human subjects}
    \item[] Question: Does the paper describe potential risks incurred by study participants, whether such risks were disclosed to the subjects, and whether IRB approvals (or equivalent) were obtained?
    \item[] Answer: \answerNA{}
    \item[] Justification: No human subjects are involved.
    \item[] Guidelines:
    \begin{itemize}
        \item If obtained, clearly state IRB (or equivalent) approval; keep anonymity for initial submissions.
    \end{itemize}

\item {\bf Declaration of LLM usage}
    \item[] Question: Does the paper describe the usage of LLMs if it is an important, original, or non-standard component of the core methods in this research?
    \item[] Answer: \answerYes{}
    \item[] Justification: The paper centers on LLM training/fine-tuning; usage appears throughout (e.g., GPT-Neo 125M/2.7B in Sections~\ref{sec:owt}, \ref{sec:finetuning}).
    \item[] Guidelines:
    \begin{itemize}
        \item See LLM policy: \url{https://neurips.cc/Conferences/2025/LLM}.
    \end{itemize}

\end{enumerate}

\section{Experimental Details and Further Results}
\label{sec:expedetails}
\subsection{Pre-training on TinyStories}
For experiments in Sec. \ref{sec:tinystories}, we used the configuration available on the Huggingface Hub \footnote{Tiny Stories Available at: \url{ https://huggingface.co/datasets/roneneldan/TinyStories}}. We trained our own 10k vocabulary tokenizer on the dataset. We also report in Fig. \ref{fig:rebuttal} the results of our study on the impact of halving the batch size for DPU by not performing any gradient accumulation (\ie, performing an optimizer's step at each communication).

\begin{figure*}[ht]
     \begin{center}
          \includegraphics[width=0.55\columnwidth]{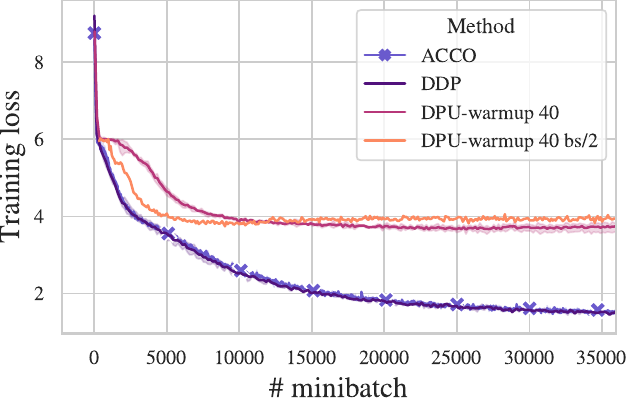}
    \end{center}
    \caption{Comparison between running DPU on 8 GPUs with 2 steps of gradient accumulation on each (to match the standard batch-size) and DPU with only 1 gradient accumulation step. Doing so allows to double the number of optimizer's step per minibatch, but divides the effective batch size by 2. This leads to faster convergence early in the training, but worse training loss in the end.}
    \label{fig:rebuttal}
\end{figure*}
\subsection{Proofs of Sec.~\ref{sec:method}}
\label{app:proofs}
We first prove the convergence of ACCO in the Gradient Descent case.
\begin{proposition}[Gradient Descent Case]
Let $f: \mathbb{R}^d \to \mathbb{R}$ be an $L$-smooth function, and consider the iterates defined by
\[
\theta_{t+1} = \theta_t - \frac{\eta}{2} \left( \nabla f(\theta_t) + \nabla f(\tilde\theta_t) \right), \quad \tilde\theta_{t+1} = \theta_t - \eta \nabla f(\tilde\theta_t),
\]
initialized at $\theta_0, \tilde\theta_0 \in \mathbb{R}^d$. Assume that $f$ admits a global minimizer $\theta^* \in \arg\min f$. Then, for any $T > 0$ and step size $\eta \leq \frac{1}{L}$, the following bound holds:
\[
\frac{1}{T} \sum_{t=0}^{T-1} \left( \| \nabla f(\theta_t) \|^2 + \| \nabla f(\tilde\theta_t) \|^2 \right) \leq \frac{8}{\eta T} \left( f(\theta_0) + f(\tilde\theta_0) - 2f(\theta^*) + L \| \theta_0 - \tilde\theta_0 \|^2 \right).
\]
\end{proposition}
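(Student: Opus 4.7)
The plan is to run a Lyapunov argument using precisely the function
$V(\theta,\tilde\theta)=(f(\theta)-f(\theta^*))+\eta L(f(\tilde\theta)-f(\theta^*))+L\|\theta-\tilde\theta\|^2$
suggested in the main text. This is nonnegative term-by-term, and when $\eta L\le 1$ it satisfies $V(\theta_0,\tilde\theta_0)\le f(\theta_0)+f(\tilde\theta_0)-2f(\theta^*)+L\|\theta_0-\tilde\theta_0\|^2$, which is exactly the quantity appearing on the right-hand side. So the target reduces to showing a per-step descent of the form $V_{t+1}\le V_t-\tfrac{\eta}{8}\bigl(\|\nabla f(\theta_t)\|^2+\|\nabla f(\tilde\theta_t)\|^2\bigr)$ and then telescoping.

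First I would handle the coupling term. A direct substitution gives $\theta_{t+1}-\tilde\theta_{t+1}=\tfrac{\eta}{2}(\nabla f(\tilde\theta_t)-\nabla f(\theta_t))$, so by $L$-smoothness $\|\theta_{t+1}-\tilde\theta_{t+1}\|^2\le\tfrac{\eta^2L^2}{4}\|\theta_t-\tilde\theta_t\|^2\le\tfrac14\|\theta_t-\tilde\theta_t\|^2$ whenever $\eta\le 1/L$. Hence the coupling contributes a genuine contraction, leaving a slack of $\tfrac{3L}{4}\|\theta_t-\tilde\theta_t\|^2$ that can be used to absorb cross terms.

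Next, bound each $f$-value. For $f(\theta_{t+1})$, apply smoothness around $\theta_t$ with $\theta_{t+1}-\theta_t=-\tfrac{\eta}{2}(\nabla f(\theta_t)+\nabla f(\tilde\theta_t))$ and split the inner product via the identity $\langle a,a+b\rangle=\tfrac12\|a\|^2+\tfrac12\|a+b\|^2-\tfrac12\|b\|^2$ (or just Young's inequality). This yields a descent in $\|\nabla f(\theta_t)\|^2$ but pollutes with a $+\tfrac{\eta}{4}\|\nabla f(\tilde\theta_t)\|^2$ term. For $f(\tilde\theta_{t+1})$, apply smoothness around $\tilde\theta_t$ with $\tilde\theta_{t+1}-\tilde\theta_t=(\theta_t-\tilde\theta_t)-\eta\nabla f(\tilde\theta_t)$. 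This produces $-\eta(1-\tfrac{L\eta}{2})\|\nabla f(\tilde\theta_t)\|^2$, a benign $\tfrac{L}{2}\|\theta_t-\tilde\theta_t\|^2$ term, and a cross term $(1-L\eta)\langle\nabla f(\tilde\theta_t),\theta_t-\tilde\theta_t\rangle$ that I would bound by Young's inequality as $\tfrac{\eta}{2}\|\nabla f(\tilde\theta_t)\|^2+\tfrac{1}{2\eta}\|\theta_t-\tilde\theta_t\|^2$.

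Finally, I would assemble everything: multiply the $f(\tilde\theta_{t+1})$ bound by the precisely-chosen weight $\eta L$ and add it to the $f(\theta_{t+1})$ bound and the coupling contraction. The weight $\eta L$ ensures that the polluting $+\tfrac{\eta}{4}\|\nabla f(\tilde\theta_t)\|^2$ from the first bound is dominated by the $-\eta^2 L\|\nabla f(\tilde\theta_t)\|^2$-type gain from the $\tilde\theta$ piece (up to factors of order unity since $\eta L\le 1$), and the scalar-valued $\|\theta_t-\tilde\theta_t\|^2$ slack $\tfrac{3L}{4}$ from the coupling absorbs both the $\tfrac{L}{2}$ smoothness residue and the Young's-inequality residue from the cross term (taking $\eta\le 1/L$ again to control the $\tfrac{1}{2\eta}$ factor after multiplication by $\eta L$). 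The net result is a clean $V_{t+1}\le V_t-\tfrac{\eta}{8}(\|\nabla f(\theta_t)\|^2+\|\nabla f(\tilde\theta_t)\|^2)$, after which telescoping from $0$ to $T-1$, dropping $V_T\ge 0$, dividing by $T$, and using the initial bound on $V_0$ finishes the proof.

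The main obstacle is purely bookkeeping: all six residual terms (two from $f(\theta_{t+1})$, three from $f(\tilde\theta_{t+1})$, one from the coupling) must cancel with the right sign after the specific reweighting by $\eta L$ and $L$ in the Lyapunov. I would keep $\eta L\le 1$ as a single working hypothesis throughout and choose Young's-inequality parameters so that the $\|\theta_t-\tilde\theta_t\|^2$ budget $\tfrac{3L}{4}$ and the $\|\nabla f(\tilde\theta_t)\|^2$ budget from the $\tilde\theta$-step are not exhausted; the factor $8$ in the final rate is the slack remaining after this matching.
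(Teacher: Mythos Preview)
Your overall architecture---the Lyapunov function $V$, the coupling contraction $\|\theta_{t+1}-\tilde\theta_{t+1}\|^2\le\tfrac{\eta^2L^2}{4}\|\theta_t-\tilde\theta_t\|^2$, and the telescoping---matches the paper exactly. The gap is in how you bound $f(\theta_{t+1})-f(\theta_t)$, and the error is precisely the sentence ``the weight $\eta L$ ensures that the polluting $+\tfrac{\eta}{4}\|\nabla f(\tilde\theta_t)\|^2$ \ldots\ is dominated by the $-\eta^2 L\|\nabla f(\tilde\theta_t)\|^2$-type gain''. It is not: the pollution is of order $\eta$ while the compensation, after multiplying the $\tilde\theta$-descent by the Lyapunov weight $\eta L$, is of order $\eta^2 L$. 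Concretely, collecting your own bounds, the coefficient of $\|\nabla f(\tilde\theta_t)\|^2$ in $V_{t+1}-V_t$ is
\[
\tfrac{\eta}{4}+\eta L\bigl(-\tfrac{\eta}{2}+\tfrac{L\eta^2}{2}\bigr)
=\eta\bigl(\tfrac14-\tfrac{\eta L}{2}+\tfrac{(\eta L)^2}{2}\bigr),
\]
and the parenthesis is minimized at $\eta L=\tfrac12$ with value $\tfrac18>0$. So for every admissible $\eta$ this coefficient is \emph{positive}, and you cannot obtain $V_{t+1}\le V_t-\tfrac{\eta}{8}\|\nabla f(\tilde\theta_t)\|^2$ this way.

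The fix is a different polarization in the $f(\theta_{t+1})$ step. Instead of $\langle a,a+b\rangle=\tfrac12\|a\|^2+\tfrac12\|a+b\|^2-\tfrac12\|b\|^2$, use $\langle a,b\rangle=\tfrac12\|a\|^2+\tfrac12\|b\|^2-\tfrac12\|a-b\|^2$. This produces \emph{negative} coefficients on both $\|\nabla f(\theta_t)\|^2$ and $\|\nabla f(\tilde\theta_t)\|^2$ directly from the $\theta$-step, at the price of a $+\tfrac{\eta}{4}\|\nabla f(\theta_t)-\nabla f(\tilde\theta_t)\|^2$ term; by $L$-smoothness this is $\le\tfrac{\eta L^2}{4}\|\theta_t-\tilde\theta_t\|^2$, which \emph{is} absorbed by the coupling slack $\tfrac{3L}{4}\|\theta_t-\tilde\theta_t\|^2$ you already identified. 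The paper proceeds exactly this way: the descent in $\|\nabla f(\tilde\theta_t)\|^2$ comes entirely from the $f(\theta_{t+1})$ bound, and the $\eta L(f(\tilde\theta)-f(\theta^*))$ term in $V$ plays no role beyond nonnegativity---its increment is simply upper-bounded by $\tfrac{1}{2\eta}\|\theta_t-\tilde\theta_t\|^2$ and fed into the same coupling budget.
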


\begin{proof}
We define the Lyapunov potential:
\[
V(\theta, \tilde{\theta}) := \left( f(\theta) - f(\theta^*) \right) + \eta L\left( f(\tilde{\theta}) - f(\theta^*) \right) + L \| \theta - \tilde{\theta} \|^2.
\]
Using the $L$-smoothness of $f$, we apply the standard descent lemma:
\begin{align*}
f(\theta_{t+1}) - f(\theta_t) 
&\leq -\frac{\eta}{2} \nabla f(\theta_t)^\top \left( \nabla f(\theta_t) + \nabla f(\tilde{\theta}_t) \right) 
+ \frac{L \eta^2}{8} \left\| \nabla f(\theta_t) + \nabla f(\tilde{\theta}_t) \right\|^2 \\[0.5em]
&= -\frac{\eta}{2} \| \nabla f(\theta_t) \|^2 
- \frac{\eta}{2} \langle \nabla f(\theta_t), \nabla f(\tilde{\theta}_t) \rangle \\
&\quad + \frac{L \eta^2}{8} \left( \| \nabla f(\theta_t) \|^2 
+ \| \nabla f(\tilde{\theta}_t) \|^2 
+ 2 \langle \nabla f(\theta_t), \nabla f(\tilde{\theta}_t) \rangle \right) \\[0.5em]
&= \left( -\frac{\eta}{2} + \frac{L \eta^2}{8} \right) \| \nabla f(\theta_t) \|^2 
+ \frac{L \eta^2}{8} \| \nabla f(\tilde{\theta}_t) \|^2 
+ \left( -\frac{\eta}{2} + \frac{L \eta^2}{4} \right) \langle \nabla f(\theta_t), \nabla f(\tilde{\theta}_t) \rangle \\[0.5em]
&= \left( -\frac{\eta}{2} + \frac{L \eta^2}{8} \right) \| \nabla f(\theta_t) \|^2 
+ \frac{L \eta^2}{8} \| \nabla f(\tilde{\theta}_t) \|^2 \\
&\quad + \left( -\frac{\eta}{2} + \frac{L \eta^2}{4} \right) \cdot \frac{1}{2} \left( 
\| \nabla f(\theta_t) \|^2 + \| \nabla f(\tilde{\theta}_t) \|^2 
- \| \nabla f(\theta_t) - \nabla f(\tilde{\theta}_t) \|^2 \right) \\[0.5em]
&= \left( -\frac{\eta}{2} + \frac{L \eta^2}{8} + \frac{1}{2} \left( -\frac{\eta}{2} + \frac{L \eta^2}{4} \right) \right) \| \nabla f(\theta_t) \|^2 \\
&\quad + \left( \frac{L \eta^2}{8} + \frac{1}{2} \left( -\frac{\eta}{2} + \frac{L \eta^2}{4} \right) \right) \| \nabla f(\tilde{\theta}_t) \|^2 \\
&\quad - \frac{1}{2} \left( -\frac{\eta}{2} + \frac{L \eta^2}{4} \right) \| \nabla f(\theta_t) - \nabla f(\tilde{\theta}_t) \|^2\\
&= \left( -\frac{3\eta}{4} + \frac{L \eta^2}{4} \right) \| \nabla f(\theta_t) \|^2 
+ \left( -\frac{\eta}{4} + \frac{L \eta^2}{4} \right) \| \nabla f(\tilde{\theta}_t) \|^2 \\
&\quad + \left( \frac{\eta}{4} - \frac{L \eta^2}{8} \right) \| \nabla f(\theta_t) - \nabla f(\tilde{\theta}_t) \|^2\\
&\leq \left( -\frac{3\eta}{4} + \frac{L \eta^2}{4} \right) \| \nabla f(\theta_t) \|^2 
+ \left( -\frac{\eta}{4} + \frac{L \eta^2}{4} \right) \| \nabla f(\tilde{\theta}_t) \|^2 \\
&\quad + \left( \frac{\eta}{4} - \frac{L \eta^2}{8} \right) L^2\| \theta_t - \tilde{\theta}_t \|^2
\end{align*}

For the second point update, again using smoothness:

\begin{align*}
f(\tilde{\theta}_{t+1}) - f(\tilde{\theta}_t) 
&\leq \nabla f(\tilde{\theta}_t)^\top \left( \theta_t - \tilde{\theta}_t - \eta \nabla f(\tilde{\theta}_t) \right) 
+ \frac{L}{2} \left\| \theta_t - \tilde{\theta}_t - \eta \nabla f(\tilde{\theta}_t) \right\|^2 \\
&= \frac{1}{2\eta} \left\| \theta_t - \tilde{\theta}_t \right\|^2 
+ \left( \frac{\eta}{2} - \eta \right) \left\| \nabla f(\tilde{\theta}_t) \right\|^2 
+ \left( \frac{L}{2} - \frac{1}{2\eta} \right) \left\| \theta_t - \tilde{\theta}_t - \eta \nabla f(\tilde{\theta}_t) \right\|^2\\
&= \frac{1}{2\eta} \left\| \theta_t - \tilde{\theta}_t \right\|^2 
+ -\frac{\eta}{2} \left\| \nabla f(\tilde{\theta}_t) \right\|^2 
+ \left( \frac{L}{2} - \frac{1}{2\eta} \right) \left\| \theta_t - \tilde{\theta}_t - \eta \nabla f(\tilde{\theta}_t) \right\|^2\\
\end{align*}

The change in the regularization term satisfies, using $L$-smoothness:

\[
\| \theta_{t+1} - \tilde{\theta}_{t+1} \|^2 - \| \theta_t - \tilde{\theta}_t \|^2 
= \frac{\eta^2}{4} \left\| \nabla f(\theta_t) - \nabla f(\tilde{\theta}_t) \right\|^2 
- \left\| \theta_t - \tilde{\theta}_t \right\|^2 
\leq \left( \frac{L^2 \eta^2}{4} - 1 \right) \left\| \theta_t - \tilde{\theta}_t \right\|^2.
\]

Thus, for the sake of readability if $0\leq \eta\leq\frac 1{2L}$, we have that:
\begin{equation}
f(\theta_{t+1}) - f(\theta_t) 
\leq -\frac{1}{8}\eta \left\| \nabla f(\theta_t) \right\|^2 
-\frac{1}{8}\eta \left\| \nabla f(\tilde{\theta}_t) \right\|^2 
+ \frac{L}{8} \left\| \theta_t - \tilde{\theta}_t \right\|^2
\end{equation}
and
\begin{align*}
f(\tilde{\theta}_{t+1}) - f(\tilde{\theta}_t) \leq \frac{1}{2\eta}\Vert\theta_t-\tilde\theta_t\Vert^2
\end{align*}

and

\[
\| \theta_{t+1} - \tilde{\theta}_{t+1} \|^2 - \| \theta_t - \tilde{\theta}_t \|^2 \leq -\frac{3}4\Vert \theta_t-\tilde\theta_t\Vert^2\]

Combining the three terms (two function values and one regularizer), the total Lyapunov change is:
\begin{align*}
V(\theta_{t+1}, \tilde{\theta}_{t+1}) - V(\theta_t, \tilde{\theta}_t)
&
\leq -\frac{1}{8}\eta \left\| \nabla f(\theta_t) \right\|^2 
-\frac{1}{8}\eta \left\| \nabla f(\tilde{\theta}_t) \right\|^2 
+ (\frac{L}{8}+\frac{L}2 -\frac{3L}4)\left\| \theta_t - \tilde{\theta}_t \right\|^2\\
&\leq -\frac{1}{8}\eta \left\| \nabla f(\theta_t) \right\|^2 
-\frac{1}{8}\eta \left\| \nabla f(\tilde{\theta}_t) \right\|^2
\end{align*}

Summing over \( t = 0 \) to \( T - 1 \) and noting \( V(\theta_T, \tilde{\theta}_T) \geq 0 \), we conclude:
\[
\sum_{t=0}^{T-1} \left( \| \nabla f(\theta_t) \|^2 + \| \nabla f(\tilde{\theta}_t) \|^2 \right)
\leq \frac{4}{\eta} (V(\theta_0, \tilde{\theta}_0)- V(\theta_T, \tilde{\theta}_T) )
\leq \frac{8}{\eta} \left( f(\theta_0) + f(\tilde{\theta}_0) - 2 f(\theta^*) + L \| \theta_0 - \tilde{\theta}_0 \|^2 \right).
\]

Dividing both sides by \( T \) gives the claimed result.
\end{proof}

We now prove the convergence of ACCO in the Stochastic Gradient Descent case.

\begin{proposition}[Stochastic Gradient Descent Case with Bounded Variance]
Let $f: \mathbb{R}^d \to \mathbb{R}$ be an $L$-smooth function, and let $\theta_0=\tilde\theta_0 \in \mathbb{R}^d$ be the initialization. Suppose we perform the updates:
\[
\theta_{t+1} = \theta_t - \frac{\eta}{2} \left( g_t + \tilde{g}_t \right), \quad \tilde\theta_{t+1} = \theta_t - \eta \tilde{g}_t,
\]
where $g_t$ and $\tilde{g}_t$ are  unbiased stochastic gradients of $f$, conditionally independent, at $\theta_t$ and $\tilde\theta_t$ respectively:
\[
\mathbb{E}[g_t \mid \theta_t, \tilde\theta_t] = \nabla f(\theta_t), \quad \mathbb{E}[\tilde{g}_t \mid \theta_t, \tilde\theta_t] = \nabla f(\tilde\theta_t),
\]
and assume the variance is bounded as
\[
\mathbb{E} \left[ \| g_t - \nabla f(\theta_t) \|^2 \right] \leq \sigma^2, \quad \mathbb{E} \left[ \| \tilde{g}_t - \nabla f(\tilde\theta_t) \|^2 \right] \leq \sigma^2.
\]
Then, for any $T > 0$ and step size $\eta \leq \frac{1}{2L}$, it holds that
\[
\frac{1}{T} \sum_{t=0}^{T-1} \mathbb{E} \left[ \| \nabla f(\theta_t) \|^2 + \| \nabla f(\tilde{\theta}_t) \|^2 \right] 
\leq \frac{16}{\eta T} \left( f(\theta_0) - f(\theta^*) \right) 
+ 8\sigma^2 L\eta.
\]
\end{proposition}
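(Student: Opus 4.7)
The plan is to mirror the deterministic proof line by line, using the same Lyapunov potential
\[
V(\theta,\tilde\theta) = (f(\theta)-f(\theta^*)) + \eta L(f(\tilde\theta)-f(\theta^*)) + L\|\theta-\tilde\theta\|^2,
\]
and to track how each step of that argument acquires an additive variance contribution. Since $\theta_0=\tilde\theta_0$, the starting value is $V(\theta_0,\tilde\theta_0) = (1+\eta L)(f(\theta_0)-f(\theta^*)) \le 2(f(\theta_0)-f(\theta^*))$ under $\eta L \le 1/2$, which is exactly the factor of $2$ that turns the $8/\eta$ prefactor of the GD bound into the $16/\eta$ prefactor we need.

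First, I would apply the smoothness descent lemma to $f(\theta_{t+1})-f(\theta_t)$ and $f(\tilde\theta_{t+1})-f(\tilde\theta_t)$ just as in the GD case, but with $g_t,\tilde g_t$ in place of $\nabla f(\theta_t),\nabla f(\tilde\theta_t)$. Taking conditional expectation with respect to the filtration generated by $(\theta_t,\tilde\theta_t)$, unbiasedness reduces all linear terms to their deterministic analogs, and conditional independence gives $\mathbb{E}\langle g_t,\tilde g_t\rangle = \langle \nabla f(\theta_t),\nabla f(\tilde\theta_t)\rangle$, so cross terms also reproduce the deterministic expressions. The only change is that the quadratic terms $\mathbb{E}\|g_t+\tilde g_t\|^2$ and $\mathbb{E}\|\tilde g_t\|^2$ pick up an extra $2\sigma^2$ and $\sigma^2$ respectively, each multiplied by the $O(L\eta^2)$ smoothness factor.

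Second, I would redo the distance estimate. Here $\theta_{t+1}-\tilde\theta_{t+1} = \frac{\eta}{2}(\tilde g_t - g_t)$, so by conditional independence
\[
\mathbb{E}\|\theta_{t+1}-\tilde\theta_{t+1}\|^2 = \tfrac{\eta^2}{4}\|\nabla f(\theta_t)-\nabla f(\tilde\theta_t)\|^2 + \tfrac{\eta^2}{2}\sigma^2,
\]
which is the deterministic bound plus a $\sigma^2$ contribution. Bounding $\|\nabla f(\theta_t)-\nabla f(\tilde\theta_t)\|^2 \le L^2\|\theta_t-\tilde\theta_t\|^2$ then reproduces the same negative coefficient on $\|\theta_t-\tilde\theta_t\|^2$ as in the GD proof (modulo an $L\eta^2\sigma^2$ additive noise absorbed into the overall noise budget).

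Assembling the three contributions, the bookkeeping is identical to the deterministic proof: for $\eta \le 1/(2L)$ the coefficients of $\|\theta_t-\tilde\theta_t\|^2$ sum to something non-positive, and one is left with
\[
\mathbb{E}[V(\theta_{t+1},\tilde\theta_{t+1}) - V(\theta_t,\tilde\theta_t)] \le -\tfrac{\eta}{8}\,\mathbb{E}\bigl[\|\nabla f(\theta_t)\|^2+\|\nabla f(\tilde\theta_t)\|^2\bigr] + C\sigma^2 L\eta^2,
\]
for a universal constant $C$. Telescoping over $t=0,\dots,T-1$, using $V(\theta_T,\tilde\theta_T)\ge 0$, and dividing by $T\eta/8$ yields the claimed bound, with the constant $C$ chosen so that $8C=8$, i.e.\ the $8\sigma^2 L\eta$ noise term. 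The main obstacle is purely arithmetic: tallying the three $O(L\eta^2)\sigma^2$ contributions (two from the descent lemmas, one from the distance update) and verifying that their sum, after multiplication by the $8/\eta$ factor from the final telescoping, produces exactly $8\sigma^2 L\eta$; no new analytical idea beyond the GD argument and conditional independence is required.
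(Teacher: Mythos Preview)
Your proposal is correct and follows essentially the same approach as the paper: the same Lyapunov potential $V(\theta,\tilde\theta)=(f(\theta)-f(\theta^*))+\eta L(f(\tilde\theta)-f(\theta^*))+L\|\theta-\tilde\theta\|^2$, the descent lemma applied in conditional expectation to each of the three pieces, conditional independence to split the quadratic terms into their deterministic analogs plus $O(L\eta^2)\sigma^2$ noise, and a final telescoping using $\theta_0=\tilde\theta_0$. Your handling of the distance term $\theta_{t+1}-\tilde\theta_{t+1}=\tfrac{\eta}{2}(\tilde g_t-g_t)$ is in fact slightly cleaner than the paper's write-up, and the three noise contributions you identify ($\tfrac{L\eta^2}{4}\sigma^2$ from the $\theta$-descent, $\eta L\cdot\tfrac{L\eta^2}{2}\sigma^2$ from the $\tilde\theta$-descent, and $L\cdot\tfrac{\eta^2}{2}\sigma^2$ from the distance update) indeed sum to at most $L\eta^2\sigma^2$ under $\eta\le 1/(2L)$, giving exactly the $8\sigma^2 L\eta$ term after dividing by $\eta/8$.
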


\begin{proof}
 Using the same approach as in the full gradient case and $L$-smoothness, we get:

\begin{align*}
\mathbb{E} \left[ f(\theta_{t+1}) \mid \theta_t, \tilde\theta_t \right] &\leq f(\theta_t) - \frac{\eta}{2} \nabla f(\theta_t)^\top \mathbb{E}[g_t + \tilde{g}_t \mid \theta_t, \tilde\theta_t ] + \frac{L \eta^2}{8} \mathbb{E} \left[ \| g_t + \tilde{g}_t \|^2 \mid \theta_t, \tilde\theta_t  \right], \\
\mathbb{E} \left[ f(\tilde\theta_{t+1}) \mid \theta_t, \tilde\theta_t \right] &\leq f(\tilde\theta_t) + \nabla f(\tilde\theta_t)^\top (\theta_t - \tilde\theta_t - \eta \mathbb{E}[\tilde{g}_t \mid \theta_t, \tilde\theta_t ]) + \frac{L}{2} \mathbb{E} \left[ \| \theta_t - \tilde\theta_t - \eta \tilde{g}_t \|^2  \mid \theta_t, \tilde\theta_t \right].
\end{align*}

We also expand the expected change in the quadratic term:
\begin{align*}
\mathbb{E} \left[ \| \theta_{t+1} - \tilde\theta_{t+1} \|^2 \mid \theta_t, \tilde\theta_t \right]
&= \mathbb{E} \left[ \left\| \frac{\eta}{2} (g_t - \tilde{g}_t) + (\theta_t - \tilde\theta_t) \right\|^2  \mid \theta_t, \tilde\theta_t \right] \\
&= \| \theta_t - \tilde\theta_t \|^2 + \frac{\eta^2}{4} \mathbb{E} \left[ \| g_t - \tilde{g}_t \|^2  \mid \theta_t, \tilde\theta_t \right] + \eta \langle \theta_t - \tilde\theta_t,\mathbb{E} \left[  g_t - \tilde{g}_t \mid \theta_t, \tilde\theta_t \right] \rangle .
\end{align*}
Further simplufications give
\[
\mathbb{E} \left[ \| \theta_t - \tilde\theta_t - \eta \tilde{g}_t \|^2  \mid \theta_t, \tilde\theta_t \right] 
=  \| \theta_t - \tilde\theta_t - \eta \nabla f(\tilde \theta_t)\|^2 + \eta^2 \operatorname{Var}(\tilde{g}_t \mid \tilde\theta_t),
\]

\[
\mathbb{E} \left[ \| g_t + \tilde{g}_t \|^2  \mid \theta_t, \tilde\theta_t \right] 
= \| \nabla f(\theta_t) + \nabla f(\tilde\theta_t) \|^2 + \operatorname{Var}(g_t \mid \theta_t) + \operatorname{Var}(\tilde{g}_t \mid \tilde\theta_t),
\]

\[
\mathbb{E} \left[ \| g_t - \tilde{g}_t \|^2 \mid \theta_t, \tilde\theta_t \right] 
= \| \nabla f(\theta_t) - \nabla f(\tilde\theta_t) \|^2 + \operatorname{Var}(g_t \mid \theta_t) + \operatorname{Var}(\tilde{g}_t \mid \tilde\theta_t).
\]

Using the bounded variance a we can follow the same derivation as in the full-gradient case, with extra $\sigma^2$ terms appearing.

This yields:

\begin{align*}
\mathbb{E} \left[ V(\theta_{t+1}, \tilde{\theta}_{t+1}) \mid \theta_t, \tilde{\theta}_t \right] 
- V(\theta_t, \tilde{\theta}_t)
\leq\ &-\frac{1}{8}\eta \left\| \nabla f(\theta_t) \right\|^2 
-\frac{1}{8}\eta \left\| \nabla f(\tilde{\theta}_t) \right\|^2 \\
& + \sigma^2 \left( \frac{L\eta^2}{4} + \frac{L^2\eta^3}{2} + \frac{L\eta^2}{4} \right).
\end{align*}

Taking expectations and summing over \( t = 0 \) to \( T-1 \), then rearranging and using $0\leq\eta T\leq \frac 12$ and $\theta_0=\tilde \theta_0$, we obtain:
\[
\frac{1}{T} \sum_{t=0}^{T-1} \mathbb{E} \left[ \| \nabla f(\theta_t) \|^2 + \| \nabla f(\tilde{\theta}_t) \|^2 \right] 
\leq \frac{16}{\eta T} \left( f(\theta_0) - f(\theta^*) \right) 
+ 8\sigma^2 L\eta.
\]

\end{proof}

Note that it would be possible to derive bounds for non-increasing step size, as $V(\theta, \tilde{\theta},\eta) := \left( f(\theta) - f(\theta^*) \right) + \eta L\left( f(\tilde{\theta}) - f(\theta^*) \right) + L \| \theta - \tilde{\theta} \|^2$  satisfies $V(\theta, \tilde{\theta},\eta)\leq V(\theta, \tilde{\theta},\tilde\eta)$ for $\eta\leq \tilde\eta$.

\subsection{Pre-training on OpenWebText}
For all pre-training experiments on OpenWebText, the configuration used to instantiate the GPTNeo 125M is available on the Huggingface Hub\footnote{GPT-neo 125M Configuration Available at: \url{https://huggingface.co/EleutherAI/gpt-neo-125m/blob/main/config.json}}. We only changed the "max\_position\_embeddings" parameter from 2048 to 1024. More details are displayed in Tab. \ref{tab:pre-training-hyperparameters}. We used the OpenWebText dataset available on Huggingface\footnote{OpenWebText Dataset Available at: \url{https://huggingface.co/datasets/Skylion007/openwebtext}}. We also report in Fig. \ref{expe:H100} further results for the pre-training on H100 GPUs.

\begin{figure*}[ht]
     \begin{center}
          \includegraphics[width=0.55\columnwidth]{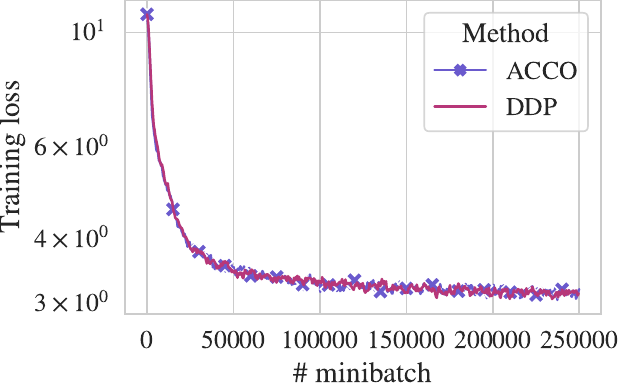}
    \end{center}
    \caption{Training loss during training on OpenWebText with 8 H100 GPUs and 6B tokens.}
    \label{expe:H100}
\end{figure*}

\begin{table}[ht]
\centering
\caption{Training hyperparameters for ACCO and DDP configurations.}
\begin{tabular}{|l|c|c|}
\hline
\textbf{Hyperparameter} & \textbf{8 H100} & \textbf{32 A100} \\
\hline
mini-batch\_size & 24 & 24 \\
n\_grad\_accumulation & ACCO: -  \newline DDP: 1 & ACCO: -  \newline DDP: 1 \\
sequence\_len & 1024 & 1024 \\
\#tokens\_batch & 400K & 1.5M \\
optimizer & AdamW & AdamW \\
learning\_rate & 6e-4 & 6e-4 \\
weight\_decay & 0.1 & 0.1 \\
adam\_beta1 & 0.9 & 0.9 \\
adam\_beta2 & 0.95 & 0.95\\
nb\_steps\_tot & 50000 & 50000 \\
scheduler & cosine & cosine \\
n\_warmup\_steps & 0 & 0 \\
\hline
\end{tabular}

\label{tab:pre-training-hyperparameters}
\end{table}

\subsection{Instruction Fine-Tuning}
For all fine-tuning experiments, we used the pre-trained GPT-neo 2.7B available on the Huggingface Hub\footnote{GPT-neo 2.7B Available at: \url{https://huggingface.co/EleutherAI/gpt-neo-2.7B}} and the associated tokenizer. We used the Alpaca dataset available on Huggingface\footnote{Alpaca Dataset  Available at: \url{https://huggingface.co/datasets/tatsu-lab/alpaca}}. More details are displayed in Tab. \ref{tab:training-hyperparameters}.We also report in Fig. \ref{expe:ft-1-2-node-2} further results on the experiment described in Sec. \ref{sec:finetuning}.

\begin{figure*}[!ht]
     \begin{center}
          \includegraphics[width=0.55\columnwidth]{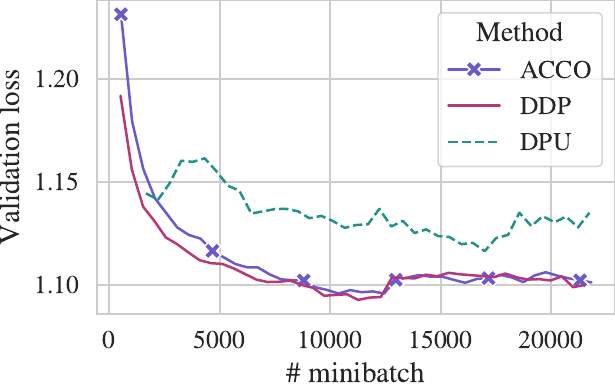}
    \end{center}
    \caption{Validation curve with 8 workers on a single node.}
    \label{expe:ft-1-2-node-2}
\end{figure*}

\begin{table}[!ht]
\centering
\caption{Finetuning hyperparameters for ACCO, DDP and DPU configurations.}
\begin{tabular}{|l|c|c|c|}
\hline
\textbf{Hyperparameter} & \acco~ & DDP & DPU \\
\hline
mini-batch\_size & 4 & 4 & 4 \\
n\_grad\_accumulation & 2 & 4 & 4 \\
total batch\_size & 128 & 128 & 128 \\
optimizer & AdamW & AdamW & AdamW \\
learning\_rate & 2e-5 & 2e-5  & 2e-5 \\
weight\_decay & 0.0 & 0.0 & 0.0 \\
adam\_beta1 & 0.9 & 0.9& 0.9 \\
adam\_beta2 & 0.95 & 0.95 & 0.95 \\
nb\_steps\_tot & 50000 & 50000 & 50000\\
scheduler & cosine & cosine & cosine \\
n\_warmup\_steps & 0 & 0 & 50 \\
\hline
\end{tabular}

\label{tab:training-hyperparameters}
\end{table}

\section{Implementation Details}
\label{sec:implemdetails}

\subsection{Profiling Results}

\begin{figure*}
     \begin{center}
          \includegraphics[width=1.\columnwidth]{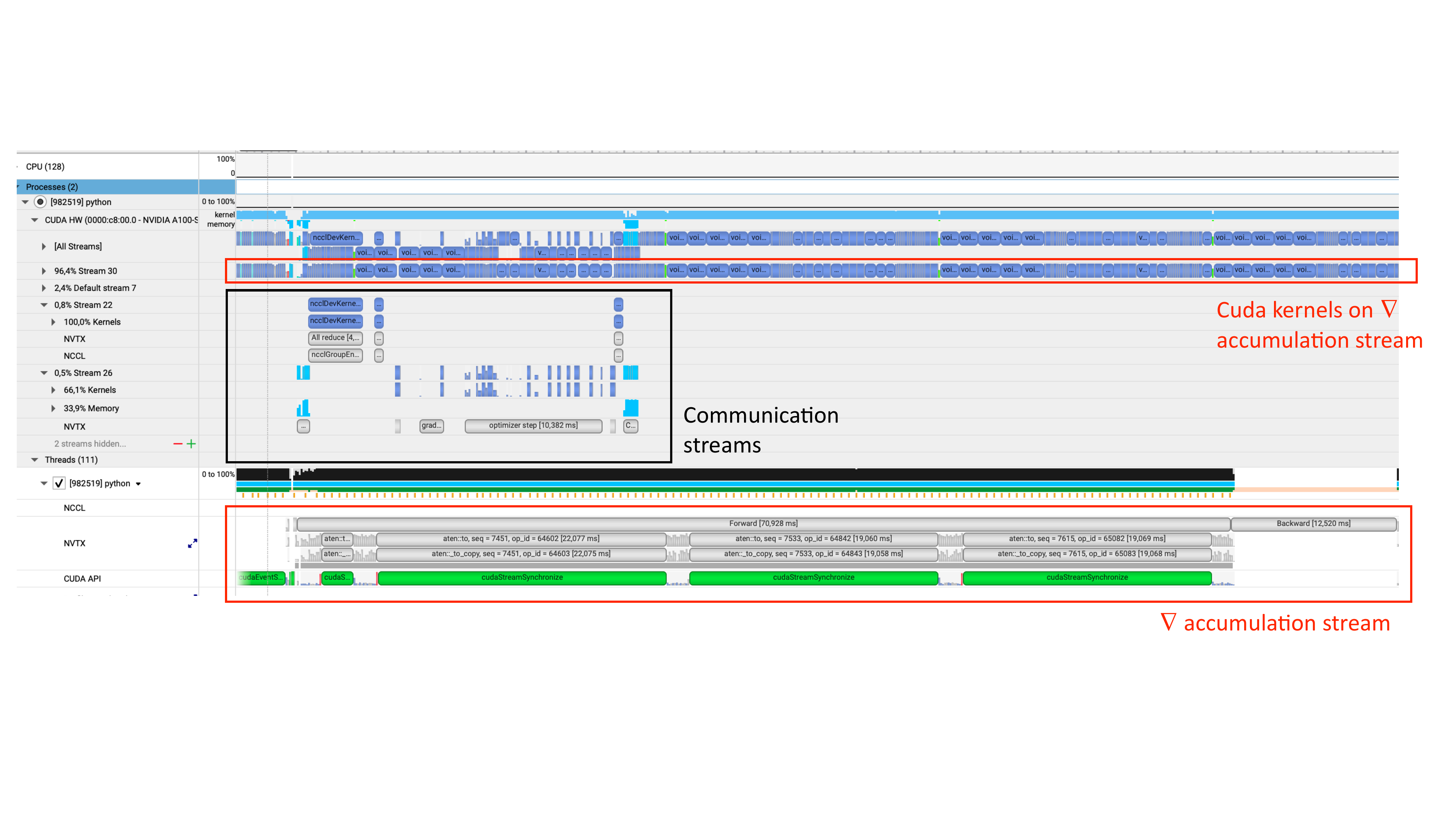}
          \includegraphics[width=1.\columnwidth]{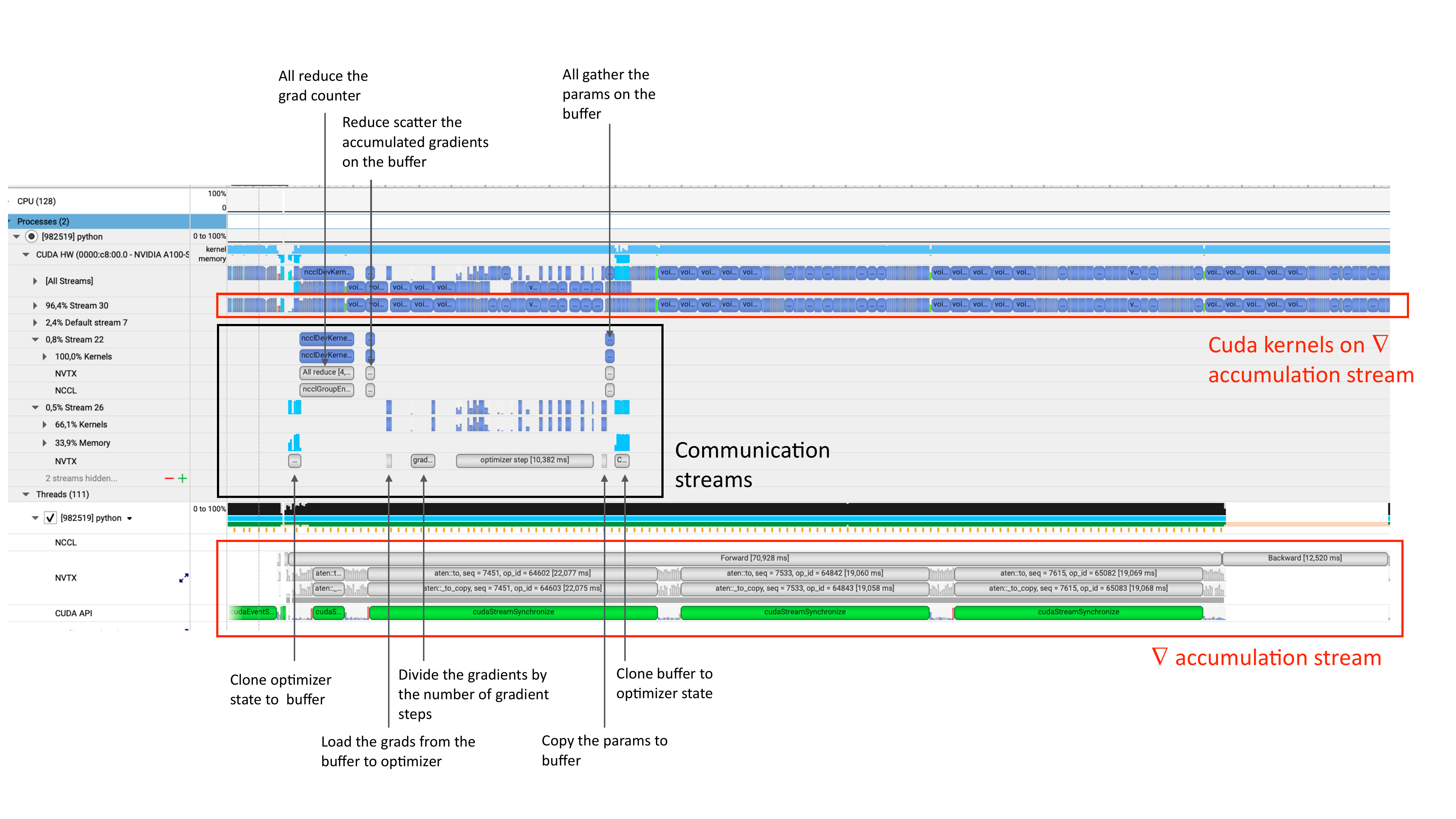}
    \end{center}
    \caption{Nsight system profile of our implementation of \acco: our two steams do run in parallel. In this Figure, the computation take more time than the communication because we only profiled small scale experiments with 8 workers, and small number of parameters (36M as we profiled on our TinyStories \cite{eldan2023tinystories} setting). This changes when using larger models and more workers, as seen in \ref{sec:motivations}.}
    \label{profile}
\end{figure*}

\subsection{Algorithm Pseudo-Code}
\label{sec:algo}
We present our algorithm for time-varying batch size $N_i^{(t)}$.


\begin{algorithm}[t]
\caption{Training with \acco~ in parallel for a worker $i$}
\label{alg:acco_training}
\begin{algorithmic}[1]
\STATE \textbf{Input:} Model with differentiable loss $F$, number of models $N$, initial parameters $\theta^{(0)}$, training steps $T$, dataset shards $\mathcal D_i$.
\STATE \textbf{Initialize:} gradients ${g_i}^{(-1)} = \nabla F(\theta^{(0)}, \xi_i^{(0)})$ and number of gradients $N_i^{(-1)}=1$ 
\STATE { \color{black}  \textbf{\# Computation  stream}}
\WHILE{ $t < T$}
 \STATE {\color{blueschema} \textbf{Stage 1.}}
 \WHILE{\color{blueschema} not \texttt{Ready\_for\_Stage\_2} }
\STATE{\color{blueschema} $\xi_{i}^{(t)} \leftarrow \mathcal D_i $ }
\STATE{\color{blueschema} $ g_i^{(t)} \leftarrow g_i^{(t)} + \nabla F(\theta^{(t)}, \xi_{i}^{(t)})    $ }
\STATE{\color{blueschema} $ N_i^{(t)} \leftarrow N_i^{(t)} + 1 $ }
\ENDWHILE 
\STATE{\color{blueschema} $ \tilde\theta^{(t+1)} \leftarrow \textbf{Buffer}_i $ }
\STATE{\color{blueschema} $ \textbf{Buffer}_i \leftarrow (N_i^{(t)},g_i^{(t)}) $ }
 \STATE {\color{orangeschema} \textbf{Stage 2.}}
 \WHILE{\color{orangeschema} not \texttt{Ready\_for\_Stage\_1} }
\STATE{\color{orangeschema} $\xi_{i}^{(t)} \leftarrow \mathcal D_i $ }
\STATE{\color{orangeschema} $\tilde g_i^{(t)} \leftarrow \tilde g_i^{(t)} + \nabla F( \tilde\theta^{(t+1)}, \xi_{i}^{(t)})    $ }
\STATE{\color{orangeschema} $ \tilde N_i^{(t)} \leftarrow \tilde N_i^{(t)} + 1 $ }
\STATE{\color{orangeschema} $t \leftarrow t+1 $ }
\ENDWHILE
\STATE{\color{orangeschema} $ \theta^{(t+1)} \leftarrow \textbf{Buffer}_i $ }
\STATE{\color{orangeschema} $ \textbf{Buffer}_i \leftarrow (\tilde N_i^{(t)},\tilde g_i^{(t)}) $ }

\ENDWHILE
\STATE
\STATE {\color{black} \textbf{\# Communication  stream}}
\WHILE{\textbf{True}}
 \STATE {  \color{blueschema} \textbf{Stage 1.}}
 \STATE{\color{blueschema} $ (\tilde N_i^{(t)},\tilde g_i^{(t)}) \leftarrow  \textbf{Buffer}_i $ }
\STATE{ \color{blueschema} $\sum_i \tilde N_i^{(t)}  \leftarrow \texttt{All\_Reduce}(\tilde N_i^{(t)}) $ }
\STATE{ \color{blueschema} $\texttt{Shard}_i \left(\sum_i g_i^{(t)} \right)  \leftarrow \texttt{Reduce\_Scatter}(\tilde g_i^{(t)}) $ }
\STATE{\color{blueschema} $\texttt{Shard}_i \left(  \tilde \theta^{(t+1)} \right ) \leftarrow \texttt{ShardedOpt} \left( \texttt{Shard}_i \left(\theta^{(t)} \right ), \texttt{Shard}_i \left( \frac{\sum_i \tilde g_i^{(t)}}{\sum_i \tilde{N_i}^{(t)}} \right) \right) $ }
\STATE{ \color{blueschema} $\textbf{Buffer}_i \leftarrow \texttt{All\_Gather}(\texttt{Shard}_i \left(  \tilde \theta^{(t+1)} \right )) $ }
\STATE{\color{blueschema} $ N_i^{(t)} \leftarrow 0 $ }
\STATE{\color{blueschema} $\texttt{Ready\_for\_Stage\_2} \leftarrow \textbf{True} $ }
\STATE{\color{blueschema} $\texttt{Ready\_for\_Stage\_1} \leftarrow \textbf{False} $ }
 \STATE {\color{orangeschema} \textbf{Stage 2.}}
 \STATE{\color{orangeschema} $ (N_i^{(t)},g_i^{(t)}) \leftarrow  \textbf{Buffer}_i $ }
\STATE{ \color{orangeschema} $\sum_i N_i^{(t)} + \tilde N_i^{(t)}  \leftarrow \texttt{All\_Reduce}( N_i^{(t)} + \sum_i \tilde N_i^{(t)}) $ }
\STATE{ \color{orangeschema} $\texttt{Shard}_i \left(\sum_i g_i^{(t)}  + \tilde g_i^{(t)}  \right)  \leftarrow \texttt{Reduce\_Scatter}(g_i^{(t)} + \sum_i \tilde g_i^{(t)}) $ }
\STATE{\color{orangeschema} $\texttt{Shard}_i \left(   \theta^{(t+1)} \right ) \leftarrow \texttt{ShardedOpt} \left( \texttt{Shard}_i \left(\theta^{(t)} \right ), \texttt{Shard}_i \left( \frac{\sum_i g_i^{(t)} + \tilde g_i^{(t)}}{\sum_i N_i^{(t)} + \tilde{N_i}^{(t)}} \right) \right) $ }
\STATE{ \color{orangeschema} $\textbf{Buffer}_i \leftarrow \texttt{All\_Gather}(\texttt{Shard}_i \left( \theta^{(t+1)} \right )) $ }
\STATE{\color{orangeschema} $ \tilde N_i^{(t)} \leftarrow 0 $ }
\STATE{\color{orangeschema} $\texttt{Ready\_for\_Stage\_1} \leftarrow \textbf{True} $ }
\STATE{\color{orangeschema} $\texttt{Ready\_for\_Stage\_2} \leftarrow \textbf{False} $ }
\ENDWHILE
\end{algorithmic}
\end{algorithm}


\newpage
\subsection{Slurm script to reproduce our results}
\begin{lstlisting}[language=YAML, caption={SLURM and ACCO fine-tuning configuration}]
#SBATCH --nodes=2                # Request 2 nodes
#SBATCH --gres=gpu:1             # 1 GPU per node
#SBATCH --ntasks-per-node=1      # 1 task per node

acco-ft:
  group_by_length: false
  batch_size: 4
  n_grad_accumulation: 4 
  learning_rate: 1e-5
  weight_decay: 0.0
  adam_beta1: 0.9
  adam_beta2: 0.95
  nb_steps_tot: 50000
  dataloader_num_workers: 1
  dataloader_pin_memory: True
  dataloader_persistent_workers: True
  label_smoothing_factor: 0
  max_length: 512
  scheduler_name: 'cosine'
  warmup: 0
  save: False
  use_mixed_precision: True
  n_warmup_steps: 0
  run_baseline_ddp: False      # True for DDP
  method_name: 'acco'          # 'ddp' for DDP
  #gradient_accumulation_steps: 1  # Add for DDP
  eval: True
  eval_step: 10
  run_expe_slow: False
  const_len_batch: False
  finetune: True
\end{lstlisting}

\end{document}